\pgfplotsset{compat=1.17}
\theoremstyle{plain}
\newtheorem{theorem}{Theorem}[section]
\newtheorem{lemma}[theorem]{Lemma}
\newtheorem{corollary}[theorem]{Corollary}
\newtheorem{proposition}[theorem]{Proposition}
\newtheorem{definition}[theorem]{Definition}
\newtheorem{remark}[theorem]{Remark}
\newtheorem{example}[theorem]{Example}
\newtheorem{axiom}[theorem]{Axiom}
\DeclareMathOperator{\KL}{KL}
\DeclareMathOperator{\Safety}{Safety}
\newcommand{\E}{\mathbb{E}}
\newcommand{\V}{\mathcal{V}}
\newcommand{\X}{\mathcal{X}}
\newcommand{\C}{\mathcal{C}}
\begin{document}

\title{Adaptive Weighting in Knowledge Distillation: An Axiomatic Framework for Multi-Scale Teacher Ensemble Optimization}

\author{Aaron R. Flouro and Shawn P. Chadwick, PhD\\
research@sparse-tech.com}

\maketitle

\begin{abstract}
Knowledge distillation with multiple teachers is increasingly used to improve robustness, efficiency, and safety, yet existing approaches rely largely on heuristic or implementation-specific weighting schemes. This paper develops an operator-agnostic axiomatic framework for adaptive weighting in multi-teacher knowledge distillation across three complementary scales: token, task, and context. We formalize structural conditions under which adaptive weighting operators are well-defined, admit multiple non-equivalent implementations, and can be hierarchically composed via product-structure normalization. Within this framework, we establish existence and non-uniqueness of conforming operators, characterize convergence of gradient-based optimization under standard assumptions, analyze stability and perturbation robustness, and provide an abstract formulation of safety-constrained distillation. The results decouple theoretical guarantees from specific weighting formulas, enabling principled analysis of adaptive distillation methods under heterogeneity, distribution shift, and safety constraints.
\end{abstract}

\begin{IEEEkeywords}
Knowledge Distillation, Adaptive Weighting, Axiomatic Framework, Multi-Scale Optimization, Convergence Analysis, Safety-Constrained Learning
\end{IEEEkeywords}

\subsection*{Why This Framework Is Needed}
Existing adaptive weighting methods in knowledge distillation rely on heuristic formulas without formal guarantees. Critical questions remain unanswered: What properties must weight operators satisfy to preserve convergence? How should token-level, task-level, and context-level adaptations compose without interference? Under what conditions do safety constraints remain satisfiable? This paper provides an axiomatic framework addressing these questions, establishing that convergence and stability guarantees hold for \emph{any} weight operator satisfying the specified axioms, not just particular implementations.

\section{Introduction}
\label{sec:introduction}

Knowledge distillation (KD) traditionally assumes uniform weighting of teacher predictions across tokens, tasks, and deployment contexts~\cite{hinton2015distilling}. Recent practice departs from this assumption by introducing adaptive weighting to reflect heterogeneous teacher reliability~\cite{zhang2023adaptive_multiteacher}, task priorities~\cite{kendall2018multitask}, and distributional shift at deployment~\cite{kd_survey_2025}. However, existing adaptive weighting approaches are largely heuristic, in that they specify weight update rules without formally characterizing the structural properties required to preserve convergence, stability, or safety~\cite{liu2021adaptive_multilevel,mtkd_rl_2025}. In particular, adaptive reweighting may amplify variance, destabilize optimization, or violate safety constraints when weights are unbounded, discontinuous, or improperly composed across sources of heterogeneity~\cite{paper1_5}. Moreover, token-level uncertainty, task-level prioritization, and context-level deployment shift impose distinct mathematical constraints, and it remains unclear under what conditions these dimensions can be combined without interference. As a result, there is currently no general, implementation-independent characterization of when adaptive weighting in multi-teacher knowledge distillation is mathematically well-posed.

This paper addresses that gap by developing an axiomatic, operator-level perspective on adaptive weighting. Rather than proposing a specific weight formula, we identify structural conditions---normalization, bounded influence, regularity, and ordinal safety monotonicity---that any valid adaptive weighting scheme must satisfy to preserve convergence, stability, and safety guarantees. This operator-theoretic framing enables us to prove results that hold for the entire class of conforming implementations, not just particular algorithmic instantiations.

A central challenge is that heterogeneity arises at multiple, qualitatively distinct scales. Token-level heterogeneity reflects local predictive uncertainty and safety criticality of individual vocabulary elements. Task-level heterogeneity captures competing objectives and varying data abundances across a multi-task curriculum. Context-level heterogeneity addresses deployment-time distribution shift and safety-critical scenario prioritization. Each scale imposes its own mathematical constraints, and naive composition risks violating the guarantees established at individual scales. We therefore formulate axioms independently at each scale and introduce a product-then-normalize composition that preserves validity under hierarchical combination.

The resulting framework provides implementation-independent guarantees: any weight operator satisfying the axioms yields almost sure convergence with rate $O(1/t)$, admits a contraction-based fixed-point characterization with perturbation robustness, and preserves designated safety orderings under deployment shift. Existence and non-uniqueness results establish that conforming operator families are non-empty and admit multiple distinct realizations, supporting both theoretical generality and practical flexibility.

\subsection{Limitations of Uniform Weighting}

Classical knowledge distillation aggregates teacher predictions via uniform averaging: given $K$ teachers producing distributions $p^{(T_k)}(\cdot|x)$ over vocabulary $\V$, the ensemble target is $\bar{p}(\cdot|x) = \frac{1}{K}\sum_{k=1}^K p^{(T_k)}(\cdot|x)$. This uniform aggregation implicitly assumes that all teachers are equally reliable across all tokens, all tasks, and all deployment contexts. In practice, this assumption fails along three distinct dimensions.

Token-level heterogeneity arises because teachers exhibit varying predictive confidence across vocabulary elements. On a given input $x$, one teacher may assign high probability to the correct token with low entropy, while another teacher may spread probability mass diffusely with high entropy. Uniform averaging dilutes the confident prediction with the uncertain one, increasing ensemble variance and potentially amplifying hallucination risk on safety-critical tokens~\cite{paper1_5}.

Task-level heterogeneity arises in multi-task learning settings where teachers are trained or evaluated on different task distributions. Uniform task weighting allocates equal optimization effort regardless of task importance, data abundance, or gradient alignment. This can cause high-volume tasks to dominate gradient updates at the expense of critical low-data tasks~\cite{kendall2018multitask,sener2018multiobjective}.

Context-level heterogeneity arises when deployment distributions differ from training distributions. A model trained on data where emergency scenarios constitute 5\% of examples may deploy in environments where such scenarios constitute 60\% of inputs. Uniform context weighting optimizes for average-case performance, potentially sacrificing safety-critical context performance for marginal gains on common contexts~\cite{kd_survey_2025}.

These three dimensions of heterogeneity are structurally distinct: token-level variation is local and input-specific, task-level variation is global across the curriculum, and context-level variation is deployment-time and distributional. Uniform aggregation treats all three homogeneously, failing to exploit available structure and potentially degrading both performance and safety.

\subsection{From Limitations to Adaptive Reweighting}

The limitations of uniform weighting motivate the introduction of adaptive weight operators that modulate teacher influence based on local uncertainty, task structure, and deployment context. Recent work has proposed various adaptive weighting heuristics, including entropy-based token weighting~\cite{zhang2023adaptive_multiteacher}, uncertainty-based task weighting~\cite{kendall2018multitask}, and domain-adaptive context weighting~\cite{liu2021adaptive_multilevel}. However, these approaches typically specify weight update rules without formally characterizing the structural properties required to preserve convergence, stability, or safety. Ad hoc weight formulas may violate normalization, introduce unbounded influence, or compose improperly across scales, leading to optimization instability or safety violations.

This paper addresses the heuristic treatment problem by establishing an axiomatic framework that characterizes the class of valid adaptive weighting operators. Rather than proposing a specific weight formula, we identify implementation-independent requirements that any valid adaptive weighting scheme must satisfy. This operator-theoretic approach mirrors axiomatic probability theory: we define what properties weight functions must satisfy, prove that conforming operators exist, and establish that the axioms admit multiple distinct implementations. The resulting guarantees hold for any conforming operator, not just particular algorithmic instantiations.

\subsection{Contributions}

The main contributions of this paper are as follows:

\begin{enumerate}
\item Axiomatic characterization of adaptive weighting via normalization, bounded influence, regularity, and ordinal safety monotonicity requirements (Section~\ref{sec:axioms}).

\item Multi-scale formulation of adaptive weighting at token, task, and context levels, with each scale addressing a distinct source of heterogeneity (Sections~\ref{sec:axioms}--\ref{sec:unification}).

\item Principled composition via product-then-normalize structure that preserves probabilistic validity and boundedness under hierarchical combination (Section~\ref{sec:unification}).

\item Existence and non-uniqueness results establishing that conforming operator families are non-empty and admit multiple distinct realizations (Section~\ref{sec:existence}).

\item Operator-agnostic convergence and stability results showing that any conforming operator yields almost sure convergence with rate $O(1/t)$ and admits perturbation-robust fixed-point characterization (Sections~\ref{sec:convergence}--\ref{sec:stability}).

\item Safety-constrained formulation with Pareto frontier characterization and preservation results for designated safety orderings under deployment shift (Section~\ref{sec:safety}).
\end{enumerate}

\subsection{Relation to Prior Work}

Knowledge distillation was introduced by Hinton et al.~\cite{hinton2015distilling} using uniform teacher weighting with temperature-scaled softmax outputs. Subsequent work introduced adaptive weighting mechanisms to address heterogeneous teacher reliability. Zhang et al.~\cite{zhang2023adaptive_multiteacher} proposed meta-learning for adaptive multi-teacher selection, while Liu et al.~\cite{liu2021adaptive_multilevel} developed multi-level weighting heuristics for hierarchical distillation. The MTKD-RL approach~\cite{mtkd_rl_2025} uses reinforcement learning to select teacher weights dynamically. These methods specify particular weight formulas or learning procedures but do not formally characterize the structural properties required to preserve convergence or stability.

In multi-task learning, Kendall et al.~\cite{kendall2018multitask} introduced uncertainty weighting to balance task losses, and Sener and Koltun~\cite{sener2018multiobjective} formulated multi-task learning as multi-objective optimization with Pareto-optimal solutions. These approaches address task-level heterogeneity but do not consider token-level or context-level adaptation, nor do they provide axiomatic characterizations of valid weight operators.

Distribution shift and domain adaptation have been studied extensively in transfer learning~\cite{kd_survey_2025}. Context-adaptive weighting schemes have been proposed for deployment-time robustness, but formal guarantees for weight operator composition across multiple scales remain absent from the literature.

Prior work in this research program established axiomatic foundations for related aspects of knowledge distillation: probability-domain temperature scaling~\cite{paper1}, variance-reliability analysis~\cite{paper1_5}, multi-teacher aggregation~\cite{paper2}, and recursive meta-distillation~\cite{paper3}. The present paper extends this line of work to adaptive weighting, providing the first axiomatic characterization of when multi-scale adaptive weighting is mathematically well-posed, together with operator-agnostic guarantees that hold for any conforming implementation.

\subsection{Organization}

Section~\ref{sec:preliminaries} establishes notation and standing assumptions. Section~\ref{sec:axioms} develops axiomatic frameworks for token, task, and context weights independently. Section~\ref{sec:unification} unifies the three scales via product-then-normalize composition. Sections~\ref{sec:existence}--\ref{sec:safety} establish existence, characterize convergence, analyze stability, and formalize safety constraints. Section~\ref{sec:conclusion} discusses implications and open questions.

The framework can be understood as characterizing a family of weighted ensemble operators on probability distributions. Each weight configuration specifies how teacher predictions combine; the axioms constrain this family to ensure mathematical well-posedness. Convergence results establish that the student approaches the weighted ensemble target, and stability results establish robustness under weight perturbations.

\begin{table*}[t]
\centering
\begin{threeparttable}
\caption{Comparison of Adaptive Weighting Strategies in Knowledge Distillation}
\label{tab:comparison}
\begin{tabular}{l|ccccc}
\toprule
\textbf{Strategy} & \textbf{Convergence} & \textbf{Token Het.} & \textbf{Task Het.} & \textbf{Context Het.} & \textbf{Formal Guarantees} \\
\midrule
Uniform KD~\cite{hinton2015distilling} & $O(1/t)$ & \texttimes & \texttimes & \texttimes & \checkmark \\
Uncertainty Weighting~\cite{kendall2018multitask} & $O(1/t)$ & \texttimes & \checkmark & \texttimes & Partial \\
Meta-Teacher~\cite{zhang2023adaptive_multiteacher} & Unknown & \checkmark & \checkmark & \texttimes & \texttimes \\
Multi-Level~\cite{liu2021adaptive_multilevel} & Unknown & \checkmark & \checkmark & \texttimes & \texttimes \\
MTKD-RL~\cite{mtkd_rl_2025} & RL bounds & \checkmark & \checkmark & Partial & \texttimes \\
\midrule
\textbf{This Work (Axiomatic)} & $O(1/t)$ & \checkmark & \checkmark & \checkmark & \checkmark \\
\bottomrule
\end{tabular}
\begin{tablenotes}
\footnotesize
\item Het.\ = Heterogeneity handling. Token Het.\ addresses per-token uncertainty; Task Het.\ addresses multi-task prioritization; Context Het.\ addresses deployment distribution shift. Formal Guarantees indicates rigorous convergence/stability proofs.
\end{tablenotes}
\end{threeparttable}
\end{table*}

\section{Preliminaries and Axiomatic Setup}
\label{sec:preliminaries}

\subsection{Notation}

Let $\V$ denote a finite vocabulary with $|\V| = V$ tokens. A teacher model $T_k$ produces conditional probability distributions $p^{(T_k)}(\cdot|x): \V \to [0,1]$ for inputs $x \in \X$ (input space). A student model $S$ with parameters $\theta \in \Theta$ produces distributions $p^{(S)}(\cdot|x; \theta)$. We consider:

\begin{itemize}
\item $K \geq 1$ pre-trained teacher models
\item A collection of $M \geq 1$ tasks $\{t_1, \ldots, t_M\}$ with associated data distributions $D_{t_j}$
\item Context space $\C$ equipped with a probability measure $\mu_\C$
\end{itemize}

\subsection{Standing Analytical Assumptions}

Throughout this paper, we maintain the following analytical assumptions:

(A1) \emph{Bounded Weights.}
There exist constants $0 < w_{\min} \leq w_{\max} < \infty$ such that all weight functions satisfy:
$$w_{\min} \leq w_k(\cdot) \leq w_{\max}$$
for all teachers $k$, inputs, tokens, tasks, and contexts. This assumption prevents collapse to a single teacher and ensures controlled influence in adaptive aggregation.

(A2) \emph{Lipschitz Continuity.}
Each weight function is $L$-Lipschitz continuous in its arguments (with respect to suitable metrics on $\X$ and $\C$) for some $L < \infty$. This assumption excludes discontinuous reweighting schemes and enables stability analysis.

(A3) \emph{Realizability.}
In a realizable case, there exists $\theta^* \in \Theta$ such that the student can represent convex combinations of teacher distributions. This assumption ensures the optimization target is achievable within the student's hypothesis class.

(A4) \emph{Smoothness.}
Student log-probabilities are twice continuously differentiable in $\theta$ for all tokens and inputs.

\begin{example}[Divergence Without Bounded Weights]
\label{ex:unbounded-divergence}
Consider $K=2$ teachers with weights $w_1^{(n)} = 1 - 1/n$, $w_2^{(n)} = 1/n$ at iteration $n$. Without boundedness (A1), as $n \to \infty$, $w_1 \to 1$ and $w_2 \to 0$. If Teacher~1 is overconfident but incorrect on token~$i$ (placing mass 0.95 on wrong label), the ensemble converges to this incorrect prediction regardless of Teacher~2's accuracy. With bounded weights $w_{\min} = 0.2$, Teacher~2 retains influence $w_2 \geq 0.2$, ensuring ensemble diversity and preventing collapse to a single teacher's errors. This example illustrates why the standing assumptions enable operator-agnostic analysis: they constrain the space of admissible weight operators to those with well-controlled behavior, allowing convergence and stability results to hold for any conforming implementation.
\end{example}

\begin{remark}[Role of Weight Bounds]
\label{rem:weight-bounds-role}
The weight bounds $(w_{\min}, w_{\max})$ in assumption (A1) serve as a regularity condition rather than a necessary requirement. Boundedness is sufficient but not necessary for the convergence and stability guarantees developed in this paper. The lower bound $w_{\min} > 0$ ensures that no teacher is completely ignored, preserving ensemble diversity. The upper bound $w_{\max} < \infty$ prevents any single teacher from dominating the ensemble. Together, these bounds control gradient variance during optimization (Lemma~\ref{lem:grad-variance}) and sensitivity to weight estimation noise (Theorem~\ref{thm:perturbation}). Tighter bounds increase stability at the cost of adaptation flexibility, while looser bounds enable stronger specialization but may increase variance. The specific choice of bounds is application-dependent and not prescribed by the framework.
\end{remark}

\section{Axiomatic Framework for Multi-Scale Adaptive Weighting}
\label{sec:axioms}

We characterize adaptive weighting through a collection of operator-level axioms defined at three distinct structural scales: token, task, and context. Each scale addresses a different source of heterogeneity and imposes its own mathematical constraints. Token-level axioms govern local predictive uncertainty and safety criticality. Task-level axioms govern multi-objective prioritization and gradient alignment. Context-level axioms govern deployment-time distribution shift and safety-critical scenario handling. For each scale, we specify axioms that conforming weight operators must satisfy, establish existence of non-trivial conforming families, and demonstrate non-uniqueness.

\subsection{Token-Level Adaptive Weighting}

Token-level adaptive weights modulate teacher influence on a per-token, per-input basis, capturing local predictive uncertainty and supporting ordinal constraints on designated safety-critical tokens. The following axioms characterize the structural properties that token-level weight operators must satisfy.

\begin{axiom}[Normalization]
\label{ax:tok-norm}
For each input $x$, token $i$, and context $c$, token-level weights satisfy:
$$\sum_{k=1}^K w_{\text{tok},k}(x, i, c) = 1$$
Normalization ensures the weighted ensemble remains a valid probability distribution over teachers.
\end{axiom}

\begin{axiom}[Positivity Preservation]
\label{ax:tok-pos}
Token weights are strictly positive almost everywhere:
$$w_{\text{tok},k}(x, i, c) > 0$$
for all $k$, $x$, $i$, $c$ (with respect to the counting measure on $\V$).
Positivity prevents degenerate solutions where teachers are completely ignored.
\end{axiom}

\begin{axiom}[Bounded Influence]
\label{ax:tok-bound}
Token weights satisfy uniform bounds:
$$w_{\min} \leq w_{\text{tok},k}(x, i, c) \leq w_{\max}$$
for all $k$, $x$, $i$, $c$.
Bounded influence ensures no teacher dominates or vanishes, preserving ensemble diversity while enabling specialization.
\end{axiom}

\begin{axiom}[Regularity]
\label{ax:tok-cont}
Token weight functions are jointly continuous in $(x, i, c)$ with respect to suitable topologies on $\X$, $\V$, $\C$.
Regularity guarantees smooth adaptation as inputs and contexts vary, preventing pathological discontinuities.
\end{axiom}

\begin{axiom}[Ordinal Safety Monotonicity]
\label{ax:tok-safety}
For safety-critical token subsets $\mathcal{S} \subseteq \V$, if teacher $k$ demonstrates higher safety performance than teacher $j$ on $\mathcal{S}$ according to a designated safety ordering, then $w_{\text{tok},k}(x, i, c) \geq w_{\text{tok},j}(x, i, c)$ for tokens $i \in \mathcal{S}$.
This axiom requires that ordinal safety comparisons be preserved in weight assignments without prescribing a specific formula for computing safety scores or translating them to weights.
\end{axiom}

\begin{remark}[Sufficient Conditions for Token Axioms]
\label{rem:token-sufficient}
The token axioms are satisfied by weight functions of the form $w_{\text{tok},k}(x,i,c) = f_k(H_k(x,i), s_k(i)) / \sum_j f_j(\cdot)$ where $f_k$ is continuous, positive-valued, decreasing in entropy $H_k$, increasing in safety score $s_k$ for tokens in $\mathcal{S}$, and bounded away from zero. This characterization is illustrative rather than exhaustive: it demonstrates that conforming operators exist and provides a template for construction, but does not enumerate all possible conforming families. Other functional forms satisfying the axioms may exist.
\end{remark}

\begin{definition}[Token-Weighted Ensemble]
\label{def:tok-ensemble}
Given token weights satisfying Axioms~\ref{ax:tok-norm}--\ref{ax:tok-safety}, define the token-weighted ensemble operator $G_{\text{tok}}$ that maps $K$ teacher distributions and weights to a single distribution $q_{\text{tok}}(\cdot|x, c)$ satisfying:
\begin{enumerate}
\item Convexity: $q_{\text{tok}}$ is a convex combination of teacher distributions
\item Normalization: $q_{\text{tok}}(\cdot|x, c)$ is a probability distribution
\item Weight conformance: The combination respects weight function values
\end{enumerate}
\end{definition}

\begin{theorem}[Existence of Conforming Token Weight Operators]
\label{thm:tok-exist}
There exist non-trivial weight function families satisfying Axioms~\ref{ax:tok-norm}--\ref{ax:tok-safety}.
\end{theorem}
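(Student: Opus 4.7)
My plan is to prove existence constructively by exhibiting a concrete parametric family of weight functions and verifying each axiom in turn. The construction I would use has the schematic form
\[
w_{\text{tok},k}(x,i,c) \;=\; (1-\lambda)\,\frac{\phi_k(x,i,c)}{\sum_{j=1}^K \phi_j(x,i,c)} \;+\; \frac{\lambda}{K},
\]
where $\lambda \in (0,1)$ is a fixed mixing parameter and $\phi_k : \X \times \V \times \C \to (0,\infty)$ is a strictly positive continuous scoring function that encodes teacher-specific reliability information. A natural concrete choice is $\phi_k(x,i,c) = \exp\!\bigl(-\alpha H_k(x|c) + \beta\, s_k(i)\bigr)$, where $H_k$ is a bounded continuous entropy functional of teacher $k$ and $s_k$ is a bounded safety score that is monotone with respect to the designated safety ordering on $\mathcal{S}$. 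The mixture with the uniform weight $\lambda/K$ is what simultaneously enforces the bounded-influence floor and ceiling, so the construction is essentially a smoothed softmax.

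The verification proceeds axiom by axiom. Normalization (Axiom~\ref{ax:tok-norm}) follows directly because the softmax term and the uniform term each sum to one over $k$. Positivity (Axiom~\ref{ax:tok-pos}) is immediate from the additive $\lambda/K$ floor. Bounded influence (Axiom~\ref{ax:tok-bound}) is where the mixing parameter does real work: every weight lies in the interval $\bigl[\lambda/K,\; (1-\lambda) + \lambda/K\bigr]$, so choosing $\lambda$ with $\lambda/K \geq w_{\min}$ and $1 - \lambda(1-1/K) \leq w_{\max}$ yields the required uniform bounds. For assumptions~(A1) to be consistent, one needs $w_{\min} \leq 1/K \leq w_{\max}$, which is part of the standing setup; any tighter prescribed window can be accommodated by enlarging $\lambda$ and, if necessary, further clipping the $\phi_k$-range.

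Regularity (Axiom~\ref{ax:tok-cont}) reduces to the observation that $\phi_k$ is a composition of continuous functions, the normalizing denominator is bounded below by $K \min_k \phi_k > 0$ uniformly on compacta, and the mixture operation preserves joint continuity in $(x,i,c)$. Ordinal safety monotonicity (Axiom~\ref{ax:tok-safety}) is the step requiring the most care, and I expect it to be the main obstacle: one must show that the softmax normalization, which couples all teachers through the denominator, does not invert the ordering induced on the numerators. The key observation is that for fixed $(x,i,c)$ the map $(\phi_1,\ldots,\phi_K) \mapsto \phi_k/\sum_j \phi_j$ is monotone increasing in $\phi_k$ when all other $\phi_j$ are held fixed, so if teacher $k$ dominates teacher $j$ in the safety ordering on $\mathcal{S}$ and $s_k(i) \geq s_j(i)$ for $i \in \mathcal{S}$, then $\phi_k \geq \phi_j$, hence the softmax component preserves the inequality; adding the common uniform offset $\lambda/K$ preserves it as well.

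Finally, non-triviality is established by noting that the family is parametrized by $(\alpha,\beta,\lambda)$ and by the choice of entropy and safety functionals, and that for any non-constant $\phi_k$ the resulting weights are not identically equal across teachers, ruling out the trivial uniform operator $w_k \equiv 1/K$. The real analytical subtlety, as flagged above, is reconciling the monotonicity requirement of Axiom~\ref{ax:tok-safety} with the global coupling of the normalization: this is precisely why a softmax (coordinate-wise monotone after normalization) is preferable to alternative normalizations such as projection onto the simplex, which can distort the ordering. The uniform floor $\lambda/K$ is then the minimal modification needed to promote this softmax family into the axiomatic class, and the proof concludes by observing that varying $(\alpha,\beta,\lambda,\phi)$ yields an infinite-dimensional subfamily of conforming operators, which also foreshadows the non-uniqueness results to follow.
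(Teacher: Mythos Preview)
Your construction is a concrete instance of what the paper calls an ``uncertainty-based construction'' (its proof is a bare three-line sketch listing uncertainty-based, performance-based, and hybrid principles without any axiom-by-axiom verification), so the overall strategy matches. Your explicit uniform-floor mixture $(1-\lambda)\,\phi_k/\sum_j\phi_j + \lambda/K$ is a genuine improvement over the paper's ``appropriately scaled'' hand-wave: it makes the bounded-influence axiom checkable rather than asserted.

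There is, however, one real gap in your verification of Axiom~\ref{ax:tok-safety}. You write that $s_k(i)\ge s_j(i)$ implies $\phi_k\ge\phi_j$, but with $\phi_k=\exp(-\alpha H_k+\beta s_k)$ this inference ignores the entropy term: if teacher $k$ is safer but also higher-entropy than teacher $j$, you can have $-\alpha H_k+\beta s_k < -\alpha H_j+\beta s_j$ and the ordering inverts. Your coordinate-wise monotonicity observation about the softmax is correct but irrelevant here---what you actually need is the trivial fact that $\phi_k\ge\phi_j$ implies $\phi_k/\Sigma\ge\phi_j/\Sigma$ (same denominator), and that antecedent is exactly what fails. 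The fix is easy: either (i) set $\alpha=0$ on tokens $i\in\mathcal S$ so the score is safety-only there, (ii) make $\beta$ dominate by requiring $\beta\,\Delta s_{\min}\ge\alpha\,\Delta H_{\max}$ where $\Delta s_{\min}$ is the minimum safety gap and $\Delta H_{\max}$ the entropy range, or (iii) use a lexicographic score that treats safety as the primary key on $\mathcal S$. Any of these preserves the rest of your argument; you just need to say which one you are invoking. (The paper's own Remark~\ref{rem:token-sufficient} has the same lacuna, so you are in good company, but your proof is detailed enough that the omission is visible.)
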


\begin{proof}[Proof (Sketch)]
Existence follows from three construction principles. Uncertainty-based constructions use functions that decrease monotonically with teacher entropy, satisfying normalization, positivity, boundedness, and regularity when appropriately scaled. Safety monotonicity can be enforced by incorporating safety scores into the weighting function. Performance-based constructions use functions proportional to teacher accuracy on held-out data, satisfying all axioms when normalized. Hybrid constructions combine uncertainty and performance measures. Each principle yields conforming operators, establishing existence.
\end{proof}

\begin{theorem}[Non-Uniqueness of Token Weight Operators]
\label{thm:tok-nonunique}
The Axioms~\ref{ax:tok-norm}--\ref{ax:tok-safety} do not uniquely determine the weight function. Multiple distinct families satisfy all five Axioms.
\end{theorem}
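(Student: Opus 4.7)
The plan is to prove non-uniqueness constructively, by exhibiting two (in fact, a continuum of) token-weight families that all satisfy Axioms~\ref{ax:tok-norm}--\ref{ax:tok-safety} yet disagree pointwise. First I would invoke Theorem~\ref{thm:tok-exist} to obtain a baseline conforming family, e.g.\ an entropy-based family $w^{(1)}_{\text{tok},k}(x,i,c)$ defined by a softmax of negative teacher entropies and then projected onto the capped simplex $\{w \in \Delta^{K-1} : w_{\min} \le w_k \le w_{\max}\}$, which is non-empty whenever $K w_{\min} \le 1 \le K w_{\max}$. Without loss of generality I may assume $1/K$ lies strictly in the open interval $(w_{\min},w_{\max})$ by tightening the bounds slightly; this is compatible with Assumption~(A1).

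Given $w^{(1)}$, I construct a second family by a uniform mixture: fix $\alpha \in (0,1)$ and set
\[
w^{(2)}_{\text{tok},k}(x,i,c) \;=\; (1-\alpha)\, w^{(1)}_{\text{tok},k}(x,i,c) \;+\; \alpha/K .
\]
Checking the axioms is then mechanical. Normalization is preserved because convex combinations of simplex vectors remain on the simplex; positivity and regularity hold because both summands are positive and continuous; bounded influence holds because the interval $[w_{\min},w_{\max}]$ contains $1/K$ in its interior, so the affine combination stays in $[w_{\min},w_{\max}]$; ordinal safety monotonicity is inherited because the map $u \mapsto (1-\alpha)u + \alpha/K$ is strictly order-preserving, so any inequality $w^{(1)}_{\text{tok},k} \ge w^{(1)}_{\text{tok},j}$ on safety-critical tokens transfers verbatim to $w^{(2)}$.

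Distinctness is then clear: $w^{(2)} \equiv w^{(1)}$ would force $w^{(1)}_{\text{tok},k}(x,i,c) \equiv 1/K$, contradicting the genuine adaptivity of the entropy-based construction at any input where the $K$ teacher entropies are not all equal. Moreover, varying $\alpha$ over $(0,1)$ yields a one-parameter continuum $\{w^{(\alpha)}\}$ of pairwise distinct conforming families, thereby establishing a stronger non-uniqueness statement than the theorem formally requires. As an independent second construction one can also use a performance-weighted family (weights proportional to held-out accuracies, then projected) and verify the axioms analogously; the resulting family is generically not in the mixture continuum above, giving genuinely different functional forms.

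The main obstacle is the simultaneous satisfaction of normalization (Axiom~\ref{ax:tok-norm}) and bounded influence (Axiom~\ref{ax:tok-bound}): raw softmax or accuracy-based scores need not lie in $[w_{\min},w_{\max}]$, so a projection step is required, and this projection must itself be continuous (for Axiom~\ref{ax:tok-cont}) and coordinate-monotone (for Axiom~\ref{ax:tok-safety}). I would dispatch this by using the Euclidean projection onto the capped simplex, which is known to be $1$-Lipschitz and to preserve the componentwise order of its input. Once this projection is in place for $w^{(1)}$, every remaining verification reduces to the observation that the affine mixing map is order-preserving and maps $[w_{\min},w_{\max}]$ into itself, which is immediate.
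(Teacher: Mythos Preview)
Your argument is correct and is in fact more careful than the paper's own proof sketch, but it proceeds by a genuinely different route. The paper establishes non-uniqueness by exhibiting two qualitatively distinct conforming families---one with weights that decay exponentially in teacher entropy, one with weights driven by inverse prediction variance---and asserting that both satisfy Axioms~\ref{ax:tok-norm}--\ref{ax:tok-safety} while producing different numerical outputs. Your approach instead starts from a single conforming family $w^{(1)}$ (whose existence is granted by Theorem~\ref{thm:tok-exist}) and perturbs it affinely toward the uniform weight $1/K$, observing that the resulting $w^{(\alpha)} = (1-\alpha)w^{(1)} + \alpha/K$ inherits every axiom because the capped simplex is convex, the mixing map is continuous and strictly order-preserving, and $1/K$ lies in $[w_{\min},w_{\max}]$.

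What each approach buys: the paper's two-family construction highlights that the axioms accommodate operators with \emph{different functional dependence} (entropy versus variance), which supports the broader narrative about implementation flexibility in Remark~\ref{rem:flexibility}. Your mixture construction is more elementary and yields a strictly stronger conclusion---an entire one-parameter continuum of conforming families---while requiring only one seed family rather than two independent verifications. You are also more explicit about the one genuinely delicate point the paper glosses over, namely that raw softmax scores need not respect the box constraint $[w_{\min},w_{\max}]$ and that the Euclidean projection onto the capped simplex is both Lipschitz (hence preserves Axiom~\ref{ax:tok-cont}) and coordinatewise monotone (hence preserves Axiom~\ref{ax:tok-safety}). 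That observation is correct: the capped-simplex projection has the form $w_k = \mathrm{clip}(z_k - \tau, w_{\min}, w_{\max})$ for a common threshold $\tau$, which is manifestly order-preserving in $z$.
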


\begin{proof}[Proof (Sketch)]
Non-uniqueness follows from the existence of distinct conforming families. Family A consists of operators that weight teachers by exponentially decaying functions of entropy, assigning higher weight to lower-entropy (more confident) predictions. Family B consists of operators that weight teachers by inverse variance, assigning higher weight to more consistent predictions. Both families satisfy all five axioms but produce numerically different weight assignments for the same teacher ensemble and input. The existence of at least two such families establishes non-uniqueness.
\end{proof}

\subsection{Task-Level Adaptive Weighting}

Task-level adaptive weights allocate teacher influence across tasks to address heterogeneity in task difficulty, data availability, and objective trade-offs. Unlike token-level weighting, which operates at the granularity of individual output symbols, task-level weighting modulates aggregation at the level of task-specific losses and objectives. Within this framework, task-level weighting is not inherently safety-critical and serves primarily to manage trade-offs among competing task objectives. For each task $t \in \{t_1, \ldots, t_M\}$, let $w_{\text{task},k}(t)$ denote the weight assigned to teacher $k$ for task $t$.

\begin{axiom}[Task Normalization]
\label{ax:task-norm}
For each task $t \in \{t_1, \ldots, t_M\}$, task-level weights satisfy:
$$\sum_{k=1}^K w_{\text{task},k}(t) = 1$$
This condition ensures that task-level aggregation defines a valid convex combination of teacher contributions for each task.
\end{axiom}

\begin{axiom}[Task Positivity]
\label{ax:task-pos}
Task weights are strictly positive:
$$w_{\text{task},k}(t) > 0$$
for all teachers $k$ and tasks $t$. This excludes degenerate solutions in which a teacher is entirely ignored for a given task.
\end{axiom}

\begin{axiom}[Task Boundedness]
\label{ax:task-bound}
There exist constants $0 < w_{\min} \leq w_{\max} < \infty$ such that:
$$w_{\min} \leq w_{\text{task},k}(t) \leq w_{\max}$$
for all teachers $k$ and tasks $t$. This axiom instantiates the global boundedness assumption of Section~\ref{sec:preliminaries} at the task level and prevents unbounded task-specific dominance.
\end{axiom}

\begin{axiom}[Task Continuity]
\label{ax:task-cont}
Task-level weight functions depend continuously on task-specific quantities with respect to chosen topologies on the task space. This regularity condition supports stability and convergence analysis by excluding discontinuous task reweighting.
\end{axiom}

\begin{axiom}[Pareto Compatibility]
\label{ax:task-pareto}
For any collection of non-negative task importance coefficients $\{\lambda_1, \ldots, \lambda_M\}$ satisfying $\sum_j \lambda_j = 1$, the induced scalarized task objective is compatible with Pareto-optimal trade-offs in the multi-objective loss space $(\ell_{t_1}, \ldots, \ell_{t_M})$. This axiom imposes a structural consistency condition on task-level weighting without asserting that optimization dynamics necessarily attain a Pareto-optimal solution. Under standard convexity and regularity conditions, any minimizer of the scalarized objective corresponds to a Pareto-optimal solution of the underlying multi-task problem.
\end{axiom}

\begin{definition}[Task-Weighted Ensemble]
\label{def:task-ensemble}
Given task-level weights satisfying Axioms~\ref{ax:task-norm}--\ref{ax:task-pareto}, the task-weighted ensemble operator $G_{\text{task}}$ maps the collection of teacher distributions to task-specific ensemble distributions $q_{\text{task}}^{(t)}(\cdot|x)$ satisfying:
\begin{enumerate}
\item Convexity: each $q_{\text{task}}^{(t)}$ is a convex combination of teacher distributions
\item Normalization: each $q_{\text{task}}^{(t)}(\cdot|x)$ is a valid probability distribution
\item Weight conformance: the aggregation respects the specified task-level weights
\end{enumerate}
\end{definition}

\begin{theorem}[Existence and Non-Uniqueness of Task Weight Operators]
\label{thm:task-exist}
There exist multiple distinct families of task-level weight functions satisfying Axioms~\ref{ax:task-norm}--\ref{ax:task-pareto}.
\end{theorem}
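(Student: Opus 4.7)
The plan is to establish the conclusion constructively by exhibiting two explicitly defined families of task-level weight functions, verifying each of Axioms~\ref{ax:task-norm}--\ref{ax:task-pareto} for both, and then arguing that the two families are numerically distinct. This mirrors the strategy used for the token-level results in Theorems~\ref{thm:tok-exist} and~\ref{thm:tok-nonunique}, and reduces the problem to (i) constructing candidates, (ii) checking five axioms per candidate, and (iii) distinguishing them.

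First, I would introduce a baseline family $\mathcal{F}_{\mathrm{unif}}$ defined by $w^{(A)}_{\text{task},k}(t) = 1/K$ for every teacher $k$ and task $t$. Normalization and positivity are immediate; boundedness holds with $w_{\min} = w_{\max} = 1/K$; continuity is trivial since the function is constant; and Pareto compatibility reduces to the classical scalarization argument, which I would invoke rather than reprove: under (A3)--(A4) the task losses $\ell_{t_j}$ are smooth and the feasible loss region can be assumed convex on the relevant neighborhood, so any minimizer of $\sum_j \lambda_j \ell_{t_j}$ with $\lambda_j \geq 0$ and $\sum_j \lambda_j = 1$ is Pareto-optimal. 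Because the uniform family induces positive scalarization coefficients whenever the user-supplied $\lambda_j$ are positive, Axiom~\ref{ax:task-pareto} holds.

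Next, I would construct a second family $\mathcal{F}_{\mathrm{perf}}$ based on a continuous performance signal. Let $\pi_k(t) \in [0,1]$ denote a continuous task-performance score for teacher $k$ on task $t$, and define
\[
w^{(B)}_{\text{task},k}(t) = \frac{\alpha + \beta\, \pi_k(t)}{\sum_{j=1}^{K}\bigl(\alpha + \beta\, \pi_j(t)\bigr)},
\]
where $\alpha > 0$ and $\beta > 0$ are fixed constants chosen so that the resulting values lie in $[w_{\min}, w_{\max}]$ (the lower bound is controlled by $\alpha$ relative to $\beta$, the upper bound by normalization). Normalization is built in; positivity follows from $\alpha > 0$; boundedness follows from the bounded range of $\pi_k$ together with the choice of $\alpha,\beta$; continuity is preserved under the affine-then-normalize composition; and Pareto compatibility again reduces to the scalarization argument, since the induced per-task coefficients are strictly positive. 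Finally, $\mathcal{F}_{\mathrm{unif}} \neq \mathcal{F}_{\mathrm{perf}}$ whenever some task exhibits teacher-performance heterogeneity, i.e., $\pi_k(t)\neq\pi_j(t)$ for some $k,j,t$, which can be verified by displaying a single instance where the two families yield different weight vectors.

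The main obstacle I anticipate is Axiom~\ref{ax:task-pareto}: unlike the other four axioms, it is not a pointwise algebraic property but a structural compatibility statement about the multi-objective landscape. Handling it rigorously requires being explicit about what ``induced scalarized task objective'' means, typically $\mathcal{L}(\theta) = \sum_j \lambda_j(w) \,\ell_{t_j}(\theta)$ for some mapping $w \mapsto \lambda(w)$ with $\lambda_j > 0$. Once this mapping is fixed and shown to preserve positivity of the scalarization coefficients, the classical result that positive convex combinations of convex objectives yield Pareto-optimal minimizers closes the argument. The remaining axioms are routine to verify for both families, and the non-uniqueness conclusion then follows from exhibiting any task instance distinguishing $\mathcal{F}_{\mathrm{unif}}$ and $\mathcal{F}_{\mathrm{perf}}$.
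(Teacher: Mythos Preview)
Your proposal is correct and follows essentially the same constructive strategy as the paper's sketch: exhibit concrete conforming families (the paper names performance-, gradient-alignment-, and data-availability-based constructions; you use uniform and performance-based), verify the five axioms, and observe that distinct functional dependence yields numerically distinct weights. Your version is simply more explicit, giving closed-form candidates and walking through each axiom, and your identification of Axiom~\ref{ax:task-pareto} as the only non-pointwise check---reducible to the classical positive-scalarization Pareto argument---matches the paper's treatment.
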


\begin{proof}[Proof (Sketch)]
Existence follows from broad classes of admissible constructions. For example, task-level weights may be defined using functions of task performance, gradient alignment with task objectives, or data availability, provided these functions satisfy normalization, positivity, boundedness, continuity, and Pareto compatibility. Distinct functional dependence yields different conforming task-level weighting families, demonstrating non-uniqueness.
\end{proof}

\begin{remark}[Sufficient Conditions for Task Axioms]
\label{rem:task-sufficient}
The task-level axioms are satisfied by any weight construction based on bounded, continuous, and normalized functions of task-specific performance indicators, such as validation loss, gradient alignment, or data availability. These conditions are sufficient but not exhaustive, and different functional choices yield distinct conforming task-level weighting operators.
\end{remark}

\subsection{Context-Level Adaptive Weighting}

Context-level adaptive weights modulate teacher influence across deployment contexts to address heterogeneity arising from distribution shift, operating conditions, and safety-relevant scenarios. Within this framework, context-level weighting provides a mechanism for analyzing how ensemble aggregation may vary across contexts without prescribing a particular deployment strategy. For a context $c \in \C$, let $w_{\text{ctx},k}(c)$ denote the weight assigned to teacher $k$.

\begin{axiom}[Context Normalization]
\label{ax:ctx-norm}
For $\mu_\C$-almost all contexts $c \in \C$, context weights satisfy:
$$\sum_{k=1}^K w_{\text{ctx},k}(c) = 1$$
This condition ensures that context-level aggregation defines a valid convex combination of teacher contributions.
\end{axiom}

\begin{axiom}[Context Positivity]
\label{ax:ctx-pos}
Context-level weights are strictly positive $\mu_\C$-almost everywhere: $w_{\text{ctx},k}(c) > 0$ for all teachers $k$ and $\mu_\C$-almost all contexts $c$. This excludes degenerate context-specific solutions in which individual teachers are entirely ignored.
\end{axiom}

\begin{axiom}[Context Boundedness]
\label{ax:ctx-bound}
There exist constants $0 < w_{\min} \leq w_{\max} < \infty$ such that:
$$w_{\min} \leq w_{\text{ctx},k}(c) \leq w_{\max}$$
for all teachers $k$ and contexts $c$. This axiom instantiates the global boundedness assumption of Section~\ref{sec:preliminaries} at the context level.
\end{axiom}

\begin{axiom}[Context Continuity]
\label{ax:ctx-cont}
Context-level weight functions are measurable with respect to $\mu_\C$ and continuous with respect to chosen topologies on $\C$. This regularity condition supports both measure-theoretic well-posedness and stability analysis under context perturbations.
\end{axiom}

\begin{axiom}[Safety Context Prioritization]
\label{ax:ctx-safety}
For designated safety-critical context subsets $\C_{\text{safe}} \subseteq \C$, suppose an ordinal safety comparison indicates that teacher $k$ outperforms teacher $j$ on $\C_{\text{safe}}$. Then the context-level weighting operator assigns greater or equal weight to teacher $k$ than to teacher $j$ on contexts in $\C_{\text{safe}}$: $w_{\text{ctx},k}(c) \geq w_{\text{ctx},j}(c)$ for $c \in \C_{\text{safe}}$. This axiom imposes an ordinal monotonicity constraint without prescribing a particular safety metric or weighting formula.
\end{axiom}

\begin{definition}[Context-Weighted Ensemble]
\label{def:ctx-ensemble}
Given context weights satisfying Axioms~\ref{ax:ctx-norm}--\ref{ax:ctx-safety}, define the context-weighted ensemble operator $G_{\text{ctx}}$ mapping teacher distributions and contexts to ensemble distributions $q_{\text{ctx}}(\cdot|x, c)$ satisfying:
\begin{enumerate}
\item Convexity: $q_{\text{ctx}}$ is a convex combination of teacher distributions
\item Normalization: $q_{\text{ctx}}(\cdot|x, c)$ is a valid probability distribution for each input $x$ and context $c$
\item Weight conformance: The aggregation respects the specified context-level weights
\end{enumerate}
\end{definition}

\begin{theorem}[Existence and Non-Uniqueness of Context Weight Operators]
\label{thm:ctx-exist}
There exist multiple distinct families of context-level weight functions satisfying Axioms~\ref{ax:ctx-norm}--\ref{ax:ctx-safety}.
\end{theorem}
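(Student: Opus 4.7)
The plan is to establish existence by explicitly constructing one conforming family of context-level weight operators, and then obtain non-uniqueness by exhibiting a second family that produces numerically different weight assignments on some context with positive $\mu_\C$-measure. This parallels the strategy already used for Theorems~\ref{thm:tok-exist} and \ref{thm:tok-nonunique}, adapted to the measure-theoretic setting on $(\C,\mu_\C)$. I would first fix, for each teacher $k$, a bounded continuous score $s_k:\C\to\R$ whose relative ordering on $\C_{\text{safe}}$ agrees with the designated ordinal safety comparison of Axiom~\ref{ax:ctx-safety} (such a score exists whenever the safety ordering is expressible as a bounded measurable ranking on $\C_{\text{safe}}$, which is the minimal structure the axiom implicitly requires).

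For existence, I propose the uniform-floored softmax family
\[
w_{\text{ctx},k}^{(A)}(c) \;=\; \alpha\cdot \frac{1}{K} \;+\; (1-\alpha)\cdot \frac{\exp(\beta\, s_k(c))}{\sum_{j=1}^{K}\exp(\beta\, s_j(c))},
\]
with $\alpha\in(0,1)$ and $\beta>0$. Axiom~\ref{ax:ctx-norm} holds because both components are themselves normalized in $k$; Axiom~\ref{ax:ctx-pos} is immediate; boundedness (Axiom~\ref{ax:ctx-bound}) follows from $w_{\text{ctx},k}^{(A)}(c)\in[\alpha/K,\;\alpha/K+(1-\alpha)]$, which fits any pair with $w_{\min}\le \alpha/K$ and $w_{\max}\ge \alpha/K+(1-\alpha)$; measurability and continuity (Axiom~\ref{ax:ctx-cont}) are inherited from the $s_k$; and Axiom~\ref{ax:ctx-safety} holds because $\exp(\beta\cdot)$ is order-preserving and the uniform floor is identical across teachers. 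For non-uniqueness, I would introduce a structurally distinct family
\[
w_{\text{ctx},k}^{(B)}(c) \;=\; \gamma\cdot \frac{1}{K} \;+\; (1-\gamma)\cdot \frac{s_k(c)+c_0}{\sum_{j=1}^{K}(s_j(c)+c_0)},
\]
with $\gamma\in(0,1)$ and $c_0$ large enough to ensure the rational term is positive. The same five axiom checks go through, and since an exponential and an affine transformation of a common score vector coincide only when the scores are constant across teachers, $w^{(A)}$ and $w^{(B)}$ differ on any context where the $s_k(c)$ are not all equal, which by assumption forms a set of positive $\mu_\C$-measure.

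The principal obstacle is reconciling the strict lower bound $w_{\min}>0$ in Axiom~\ref{ax:ctx-bound} with ordinal safety monotonicity in Axiom~\ref{ax:ctx-safety}, since a pure softmax or pure rational weighting can drive a weaker teacher's weight arbitrarily close to zero on $\C_{\text{safe}}$, violating boundedness. The uniform-floor mixture in both constructions resolves exactly this tension: every teacher retains weight at least $\min(\alpha,\gamma)/K$, which can be identified with $w_{\min}$, while monotonicity of the nonuniform component preserves the safety ordering. The only subsidiary check is that continuous transformations of $\mu_\C$-measurable scores remain measurable and continuous, which is routine. Establishing numerical distinctness of the two families on a set of positive measure then completes the proof of non-uniqueness.
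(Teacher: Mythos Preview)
Your proposal is correct and follows the same constructive strategy as the paper's proof sketch: exhibit explicit conforming families and verify the five axioms for each. The paper's sketch is considerably terser, merely naming three broad construction classes (domain identifiers, ordinal safety comparisons, divergence-based shift measures) and asserting that distinct functional dependence yields distinct families, without writing down any formula or checking individual axioms. Your version is more rigorous in two respects. First, you give fully specified formulas (a uniform-floored softmax and a uniform-floored affine-rational weighting) rather than construction principles, so each axiom can be verified line by line. Second, you explicitly identify and resolve the tension between Axiom~\ref{ax:ctx-bound} (strict lower bound $w_{\min}>0$) and Axiom~\ref{ax:ctx-safety} (ordinal monotonicity) via the uniform-floor mixture---a point the paper's sketch leaves implicit. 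What the paper's looser argument buys is breadth: it signals that many qualitatively different construction principles (not just two score-based transforms) live in the admissible class, which reinforces the non-uniqueness message at the level of design paradigms rather than parameter choices. What your argument buys is actual verifiability, including the measure-theoretic care (positive $\mu_\C$-measure of the distinguishing set) that the paper omits.
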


\begin{proof}[Proof (Sketch)]
Existence follows from broad classes of admissible constructions. For example, context-level weights may be defined as functions of domain identifiers, ordinal safety comparisons on designated contexts, or divergence-based measures of distributional shift, provided these functions satisfy normalization, positivity, boundedness, regularity, and safety monotonicity. Distinct functional dependence yields multiple conforming families, demonstrating non-uniqueness.
\end{proof}

\section{Hierarchical Unification: Product-Structure Composition}
\label{sec:unification}

We unify token-, task-, and context-level adaptive weighting into a single hierarchical framework using a product-structure composition. Within this framework, each scale addresses a distinct source of heterogeneity, and the unified operator aggregates their influence in a mathematically controlled manner.

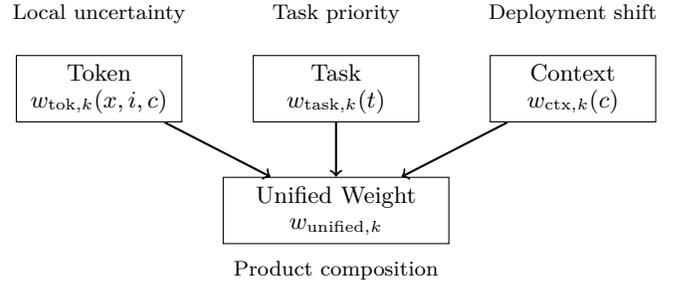
\begin{figure}[t]
\centering
\begin{tikzpicture}[
    scale=0.9,
    box/.style={rectangle, draw, minimum width=2.2cm, minimum height=0.8cm, align=center, font=\small},
    arrow/.style={->, thick}
]
\node[box] (tok) at (0,0) {Token\\$w_{\text{tok},k}(x,i,c)$};
\node[box] (task) at (3.5,0) {Task\\$w_{\text{task},k}(t)$};
\node[box] (ctx) at (7,0) {Context\\$w_{\text{ctx},k}(c)$};

\node[box, minimum width=3cm] (unified) at (3.5,-1.8) {Unified Weight\\$w_{\text{unified},k}$};

\draw[arrow] (tok) -- (unified);
\draw[arrow] (task) -- (unified);
\draw[arrow] (ctx) -- (unified);

\node[above=0.3cm of tok, font=\footnotesize] {Local uncertainty};
\node[above=0.3cm of task, font=\footnotesize] {Task priority};
\node[above=0.3cm of ctx, font=\footnotesize] {Deployment shift};
\node[below=0.1cm of unified, font=\footnotesize] {Product composition};
\end{tikzpicture}
\caption{Product-structure composition of adaptive weight operators. Each scale (token, task, context) defines a conforming weight operator addressing one dimension of heterogeneity. The unified weight operator inherits axiom satisfaction from its components via multiplicative composition, preserving normalization and boundedness guarantees.}
\label{fig:hierarchy}
\end{figure}

\subsection{Unified Weight Operator}

\begin{definition}[Unified Hierarchical Weight]
\label{def:unified-weight}
Define the unnormalized product weight as:
$$\tilde{w}_{\text{unified},k}(x, i, t, c) = w_{\text{tok},k}(x, i, c) \cdot w_{\text{task},k}(t) \cdot w_{\text{ctx},k}(c)$$
The unified hierarchical weight is obtained via normalization:
$$w_{\text{unified},k}(x, i, t, c) = \frac{\tilde{w}_{\text{unified},k}(x, i, t, c)}{\sum_{j=1}^K \tilde{w}_{\text{unified},j}(x, i, t, c)}$$
This definition satisfies three properties:
\begin{enumerate}
\item Product structure: The unified weight decomposes as a normalized product of three scale-specific components
\item Normalization preservation: For all $(x, i, t, c)$, the $K$ unified weights sum to 1
\item Bounded composition: If component weights are bounded by $[w_{\min}, w_{\max}]$, unified weights are bounded
\end{enumerate}
\end{definition}

\begin{theorem}[Validity of Product Composition]
\label{thm:product-valid}
Let $w_{\text{tok}}$, $w_{\text{task}}$, $w_{\text{ctx}}$ satisfy their respective axioms. Define the unified weight via product structure as in Definition~\ref{def:unified-weight}. Then:
\begin{enumerate}
\item Normalization: The unified weights define valid probability distributions over teachers
\item Boundedness: Unified weights satisfy bounded influence constraints
\item Convexity: The unified ensemble operator $G_{\text{unified}}$ produces valid convex combinations of teacher distributions
\end{enumerate}
\end{theorem}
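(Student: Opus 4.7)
The plan is to verify each of the three claimed properties directly from Definition~\ref{def:unified-weight} and the component-level axioms established in Section~\ref{sec:axioms}. The strategy is bottom-up: first establish well-definedness of the normalizing denominator in Definition~\ref{def:unified-weight}, then derive normalization, boundedness, and convexity as algebraic consequences of the corresponding axioms at each scale. The key structural observation is that the positivity axioms at each scale (Axioms~\ref{ax:tok-pos}, \ref{ax:task-pos}, and \ref{ax:ctx-pos}) force the unnormalized product $\tilde{w}_{\text{unified},k}$ to be strictly positive, so the quotient defining $w_{\text{unified},k}$ is well-defined.

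For well-definedness and claim (1), the three positivity axioms give $\tilde{w}_{\text{unified},k}(x,i,t,c) > 0$ for every $k$, hence $\sum_{j} \tilde{w}_{\text{unified},j}(x,i,t,c) > 0$, so the normalized weight is well-defined pointwise (modulo the $\mu_\C$-almost-everywhere caveat inherited from the context axioms). Normalization $\sum_k w_{\text{unified},k} = 1$ then follows by direct cancellation, yielding a valid probability distribution over teachers for each configuration $(x,i,t,c)$.

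For claim (2), I would propagate the component bounds multiplicatively through the product. Combining Axioms~\ref{ax:tok-bound}, \ref{ax:task-bound}, and \ref{ax:ctx-bound} gives $w_{\min}^3 \leq \tilde{w}_{\text{unified},k} \leq w_{\max}^3$, and hence $K w_{\min}^3 \leq \sum_{j} \tilde{w}_{\text{unified},j} \leq K w_{\max}^3$. Taking the worst-case ratio produces the explicit new constants
$$w'_{\min} := \frac{w_{\min}^3}{K\, w_{\max}^3} \leq w_{\text{unified},k}(x,i,t,c) \leq \frac{w_{\max}^3}{K\, w_{\min}^3} =: w'_{\max},$$
establishing bounded influence. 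Claim (3) is then immediate: the unified ensemble $G_{\text{unified}}$ aggregates teacher distributions via weights that are non-negative and sum to one, so it produces a convex combination of probability distributions, which is itself a probability distribution.

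I do not anticipate a substantive technical obstacle; the argument is essentially bookkeeping and relies only on algebra plus positivity. The one genuine subtlety is measure-theoretic: context-level positivity and normalization hold only $\mu_\C$-almost everywhere, so well-definedness of the unified weight likewise holds $\mu_\C$-almost surely, and claims (1)--(3) must be stated in this sense. A related point worth flagging for downstream use is that the derived bounds $(w'_{\min}, w'_{\max})$ are \emph{looser} than the component bounds by a factor involving $K$ and the ratio $w_{\max}/w_{\min}$; this widening of the admissible weight window is the quantitative cost of product composition and is worth recording as a corollary for the stability and convergence arguments in Sections~\ref{sec:convergence}--\ref{sec:stability}.
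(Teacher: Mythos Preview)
Your proposal is correct and follows essentially the same route as the paper's proof sketch: normalization from explicit renormalization, boundedness from multiplicative propagation of the component bounds followed by normalization, and convexity from the resulting weights being non-negative and summing to one. Your version is strictly more detailed than the paper's---you supply the explicit derived constants $w'_{\min} = w_{\min}^3/(K\,w_{\max}^3)$ and $w'_{\max} = w_{\max}^3/(K\,w_{\min}^3)$, flag the $\mu_\C$-a.e.\ caveat inherited from the context axioms, and record the widening of the admissible window as a quantitative cost of composition---none of which appears in the paper's sketch but all of which is correct and useful for the downstream stability analysis.
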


\begin{proof}[Proof (Sketch)]
Normalization follows directly from explicit renormalization of the product weights. Boundedness follows from multiplicative composition of bounded functions and subsequent normalization. Convexity and ensemble validity inherit from the component ensemble operators, since the unified operator remains a convex combination of teacher distributions.
\end{proof}

\subsection{Scale Separation Property}

\begin{proposition}[Additive Log-Space Decomposition]
\label{prop:log-decomp}
In log-space, the unnormalized unified weight admits additive decomposition:
\begin{align*}
\log \tilde{w}_{\text{unified},k}(x, i, t, c) &= \log w_{\text{tok},k}(x, i, c) \\
&\quad + \log w_{\text{task},k}(t) + \log w_{\text{ctx},k}(c)
\end{align*}
This additive structure enables:
\begin{itemize}
\item Modular analysis: Each scale can be studied independently prior to normalization
\item Compositional design: Subsets of scales (e.g., token-only or task + context) can be composed consistently within the same framework
\item Structured optimization analysis: Under standard smoothness assumptions, gradients admit a decomposed form in log-space, supporting scale-aware analysis without asserting full optimization independence
\end{itemize}
\end{proposition}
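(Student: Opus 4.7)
The plan is to derive the stated identity directly from Definition~\ref{def:unified-weight} and then justify, in turn, each of the three structural consequences listed after the display. The core algebraic content is essentially a one-line application of $\log(ab c) = \log a + \log b + \log c$, so the main work lies in (i) checking that the logarithm is well-defined on the relevant domain and (ii) articulating what the three bulleted consequences actually mean in terms of the framework.

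First I would verify that the logarithm is well-defined. By Axioms~\ref{ax:tok-pos}, \ref{ax:task-pos}, and~\ref{ax:ctx-pos}, each factor $w_{\text{tok},k}(x,i,c)$, $w_{\text{task},k}(t)$, $w_{\text{ctx},k}(c)$ is strictly positive (pointwise or $\mu_{\C}$-almost everywhere, as appropriate); the common boundedness Axioms~\ref{ax:tok-bound}, \ref{ax:task-bound}, \ref{ax:ctx-bound} further give $w_{\min} \le w_{\cdot,k} \le w_{\max}$ with $0 < w_{\min}$. Hence $\tilde{w}_{\text{unified},k}$ lies in $[w_{\min}^3, w_{\max}^3] \subset (0,\infty)$, and taking $\log$ on both sides of the definition
\[
\tilde{w}_{\text{unified},k}(x,i,t,c) = w_{\text{tok},k}(x,i,c)\, w_{\text{task},k}(t)\, w_{\text{ctx},k}(c)
\]
yields the claimed additive identity via the multiplicativity of the logarithm. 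This establishes the display equation.

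Next I would address the three bulleted consequences. For \emph{modular analysis}, I would point out that additivity in log-space means the three log-weights are independent summands, so any property that is preserved under addition of continuous bounded terms (e.g.\ Lipschitz continuity in $x$ contributed only by $\log w_{\text{tok},k}$, or measurability in $c$ contributed by $\log w_{\text{ctx},k}$ and $\log w_{\text{tok},k}$) can be read off from the corresponding component in isolation. For \emph{compositional design}, I would note that setting any subset of components to a constant function (equivalently, its log-contribution to a constant) leaves the identity intact and yields a reduced composition that still conforms to the surviving axioms, since constants satisfy normalization-after-renormalization, positivity, boundedness, and continuity trivially. For \emph{structured optimization analysis}, I would invoke Assumption (A4) and the regularity axioms to differentiate the identity: whenever a parameter $\phi$ influences only one scale, $\nabla_\phi \log \tilde{w}_{\text{unified},k} = \nabla_\phi \log w_{\text{scale},k}$, so gradient contributions decompose cleanly along scales. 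I would be careful to frame this as a structural decomposition of the unnormalized log-weight only, since after renormalization the coupling $-\log \sum_j \tilde{w}_{\text{unified},j}$ recouples the scales; this matches the proposition's deliberately cautious phrasing ``without asserting full optimization independence.''

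The main obstacle is not algebraic but expository: the core identity is immediate, so the content of the proof is really the justification that the consequences listed are genuine corollaries of the additive structure together with the axioms of Section~\ref{sec:axioms}, rather than stronger independence claims. I would therefore keep the proof short on the algebra and slightly more explicit on the scope of each bullet, emphasizing in particular that scale separation holds \emph{pre-normalization} and that any claim at the level of the normalized $w_{\text{unified},k}$ requires the additional argument supplied by Theorem~\ref{thm:product-valid}.
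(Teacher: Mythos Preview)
Your proposal is correct, and in fact more thorough than the paper's own treatment: the paper states Proposition~\ref{prop:log-decomp} without any proof, treating the identity as immediate from Definition~\ref{def:unified-weight}. Your plan---verifying well-definedness of the logarithm via the positivity and boundedness axioms, applying $\log(abc)=\log a+\log b+\log c$, and then carefully delimiting the scope of each bulleted consequence (especially the distinction between pre- and post-normalization decomposition)---is exactly the right level of rigor for what is, at bottom, a definitional observation.
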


\section{Existence, Non-Uniqueness, and Construction Principles}
\label{sec:existence}

This section establishes that adaptive weighting operators satisfying the proposed axiomatic framework exist and that the axioms do not uniquely determine a single implementation.

\subsection{Existence via Construction Principles}

\begin{theorem}[Existence of Conforming Operators]
\label{thm:exist-main}
There exist non-trivial operator families $G = (G_{\text{tok}}, G_{\text{task}}, G_{\text{ctx}}, G_{\text{unified}})$ satisfying all axioms at the token, task, and context levels, together with hierarchical product composition.
\end{theorem}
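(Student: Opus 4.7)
The plan is to establish existence by explicit construction at each scale and then invoke the product-composition result (Theorem~\ref{thm:product-valid}) to lift the component operators to a conforming unified operator. The axiomatic framework permits considerable freedom in the choice of underlying scoring functions, so I need only exhibit one conforming family per scale and verify each axiom in turn; non-triviality will be visible from the fact that the constructed weights vary non-constantly with their arguments whenever the underlying scores do.

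First, I would construct token-level operators via a clipped, safety-aware softmax. Concretely, starting from any continuous and bounded base score such as $\phi_k(x,i,c) = -\alpha H_k(x) + \beta\, s_k(i,c)\,\mathbf{1}\{i \in \mathcal{S}\}$, where $H_k$ is the Shannon entropy of teacher $k$ and $s_k$ is a designated safety score, form the unconstrained softmax weights and apply a smooth squashing map into $[w_{\min}, w_{\max}]$ followed by renormalization. This construction satisfies normalization (Axiom~\ref{ax:tok-norm}) and positivity (Axiom~\ref{ax:tok-pos}) by the strict positivity of the exponential together with renormalization, boundedness (Axiom~\ref{ax:tok-bound}) by the squashing step, and regularity (Axiom~\ref{ax:tok-cont}) by composition of continuous maps provided the squash is smooth. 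Ordinal safety monotonicity (Axiom~\ref{ax:tok-safety}) is enforced by taking $\beta > 0$ and making $s_k$ monotone in the designated safety ordering, so that higher safety yields a larger pre-softmax score and hence a weakly larger post-normalization weight on $\mathcal{S}$.

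Second, I would construct task-level operators analogously, replacing per-token entropy with a task-level proxy such as inverse validation loss or gradient alignment, and omitting the safety term since task-level weighting is not safety-critical in this framework. Pareto compatibility (Axiom~\ref{ax:task-pareto}) follows by observing that the scalarized objective induced by positive, normalized weights is a positively weighted sum of task losses, which is compatible with Pareto-optimal trade-offs via standard linear scalarization arguments under the continuity and boundedness already established. Context-level operators are built along the same lines as the token-level operators, using a context-indexed score that combines a divergence-based fit term with a safety-ordering term on $\C_{\text{safe}}$; axiom verification mirrors the token-level case, with the measure-theoretic qualifiers for Axioms~\ref{ax:ctx-pos}--\ref{ax:ctx-cont} handled by the $\mu_\C$-almost-everywhere positivity of the exponential and measurability of continuous maps.

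Third, with conforming component operators in hand, the unified operator $G_{\text{unified}}$ is defined via Definition~\ref{def:unified-weight} and its validity is immediate from Theorem~\ref{thm:product-valid}: normalization is explicit, boundedness follows from multiplicative composition of bounded factors followed by renormalization, and convexity is inherited from the component ensembles. The main obstacle I anticipate is calibrating the bounds $(w_{\min}, w_{\max})$ consistently across scales, since the product of three factors in $[w_{\min}, w_{\max}]$ lies in $[w_{\min}^{3}, w_{\max}^{3}]$ and the post-renormalization interval depends on $K$. To preserve a single global bound matching assumption (A1), the scale-specific truncation thresholds need to be chosen so that the induced unified bounds do not degenerate as $K$ grows; this is a construction-level calibration rather than a structural obstruction, and it can be resolved by parameterizing the squash thresholds as explicit functions of $K$ and the number of composed scales.
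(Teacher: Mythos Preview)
Your proposal is correct and takes essentially the same approach as the paper: explicit construction of conforming weights at each scale (your clipped, safety-aware softmax is precisely the paper's ``Principle 1 -- Uncertainty-based constructions''), followed by appeal to the product-composition result for the unified operator. Your treatment is in fact more detailed than the paper's sketch, which merely lists three construction principles (uncertainty-based, performance-based, information-theoretic) without working through axiom verification; your added discussion of bound calibration under product composition is also absent from the paper but consistent with its spirit.
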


\begin{proof}[Proof (Sketch)]
Existence follows from the closure of broad classes of admissible weighting functions under normalization and bounded composition. We outline three representative construction classes.

Principle 1 -- Uncertainty-based constructions. Operators that map teacher uncertainty measures (e.g., entropy, variance, disagreement) to weights via monotone decreasing, bounded, and continuous functions satisfy the token-, task-, and context-level axioms. Ordinal safety axioms can be satisfied by incorporating safety-based comparisons on designated subsets.

Principle 2 -- Performance-based constructions. Operators defined using task- or context-specific performance measures (such as validation loss or accuracy), combined with normalization and boundedness constraints, satisfy all axioms without fixing a particular functional form.

Principle 3 -- Information-theoretic constructions. Operators derived from optimizing weighted divergence objectives between teacher and student distributions, under standard positivity, boundedness, and normalization constraints, yield conforming weight families. Different choices of divergence or constraint structure produce distinct admissible operators.

Each construction class yields at least one valid operator family, establishing existence.
\end{proof}

\subsection{Non-Uniqueness and Implementation Diversity}

\begin{theorem}[Non-Uniqueness of Conforming Operators]
\label{thm:nonunique-main}
The full collection of axioms across all three scales, together with hierarchical product composition, does not uniquely determine the adaptive weighting operators. Multiple distinct operator families satisfy all axioms.
\end{theorem}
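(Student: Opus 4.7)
The plan is to establish non-uniqueness by explicit exhibition: produce two distinct unified operator families $G^{A}$ and $G^{B}$, each satisfying every axiom together with the product composition of Definition~\ref{def:unified-weight}, and show that they disagree on at least one argument tuple $(x,i,t,c)$. The argument does not require constructing these operators from scratch; instead, I would lift the per-scale non-uniqueness already obtained in Theorems~\ref{thm:tok-nonunique}, \ref{thm:task-exist}, and \ref{thm:ctx-exist} through the composition step, and then verify that the distinctness survives normalization.

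Concretely, I would proceed in five short steps. First, invoke Theorem~\ref{thm:tok-nonunique} to select two distinct conforming token-level families $w^{A}_{\text{tok}}$ and $w^{B}_{\text{tok}}$, for instance the entropy-decay family and the inverse-variance family identified in its proof sketch. Second, fix any single conforming task-level operator $w_{\text{task}}$ (which exists by Theorem~\ref{thm:task-exist}) and any single conforming context-level operator $w_{\text{ctx}}$ (which exists by Theorem~\ref{thm:ctx-exist}). Third, form the unified operators $G^{A}_{\text{unified}}$ and $G^{B}_{\text{unified}}$ via the product-then-normalize construction of Definition~\ref{def:unified-weight}. Fourth, apply Theorem~\ref{thm:product-valid} to conclude that both are conforming, since normalization, boundedness, and convexity are preserved under product composition. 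Fifth, exhibit a specific witness tuple $(x^{*},i^{*},t^{*},c^{*})$ at which the two families induce different teacher weight vectors, thereby establishing $G^{A}_{\text{unified}} \neq G^{B}_{\text{unified}}$.

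The main obstacle is the final distinctness step: the normalization in Definition~\ref{def:unified-weight} is invariant under multiplying every teacher's unnormalized weight by a common positive factor. Hence it is not automatic that distinct token operators $w^{A}_{\text{tok}}$ and $w^{B}_{\text{tok}}$ induce distinct unified operators after the common task and context factors are absorbed. To handle this, I would strengthen the choice in the first step so that the two families differ not merely pointwise but at the level of \emph{ratios}: for some pair of teacher indices $k,j$ and some witness input, $w^{A}_{\text{tok},k}/w^{A}_{\text{tok},j} \neq w^{B}_{\text{tok},k}/w^{B}_{\text{tok},j}$. Such ratio-level separations are preserved by multiplication against the (index-dependent but common) task- and context-level factors and by the subsequent normalization, as can be read off directly from the log-additive form of Proposition~\ref{prop:log-decomp}. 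The entropy-based and variance-based families of Theorem~\ref{thm:tok-nonunique} are in general ratio-inequivalent, since entropy and variance are not monotone transformations of one another across arbitrary teacher distributions, so a concrete ratio-separating witness exists.

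As a closing observation, the same construction yields a quantitatively stronger conclusion: by varying any one of the three scale-level operators along a one-parameter family of conforming choices (e.g., interpolating between entropy-decay and inverse-variance token weights, both of which remain conforming under the axioms), one obtains an uncountable family of distinct unified operators, each admissible by Theorem~\ref{thm:product-valid}. This reinforces the interpretation of the axiomatic framework as characterizing a rich class of implementations rather than pinning down a canonical one.
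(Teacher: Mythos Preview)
Your argument is correct, and it is somewhat more modular than the paper's own proof sketch. The paper proceeds by exhibiting three fully distinct families $A$, $B$, $C$ in which \emph{all three} scale-level components differ simultaneously (entropy-based token weights with inverse-loss task weights versus variance-based token weights with gradient-alignment task weights, and so on), and simply asserts that these produce different numerical assignments. You instead hold the task and context components fixed and vary only the token component, thereby reducing unified non-uniqueness to the already-established token-level non-uniqueness of Theorem~\ref{thm:tok-nonunique} and lifting it through Theorem~\ref{thm:product-valid}. This is cleaner as a logical dependency structure and makes explicit exactly how much variation is needed.

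One minor remark: the ``main obstacle'' you flag---that normalization might collapse distinct token operators to the same unified operator---is in fact automatically handled by Axiom~\ref{ax:tok-norm}. Since both $w^{A}_{\text{tok}}$ and $w^{B}_{\text{tok}}$ already sum to $1$ over teachers, any pointwise difference between them is necessarily a ratio-level difference; if $w^{A}_{\text{tok},k}/w^{B}_{\text{tok},k}$ were constant in $k$, summing over $k$ would force that constant to be $1$. So your ratio-separation argument is valid, but the strengthening step you describe is free rather than requiring a separate verification about entropy versus variance. The paper's sketch does not address this point at all, so your treatment is strictly more careful. Your closing observation about uncountable interpolating families is also absent from the paper and is a genuine addition.
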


\begin{proof}[Proof (Sketch)]
Consider three families of operators with different functional dependence.

Family A (Entropy-based). Token-level weights decrease exponentially with predictive entropy, task-level weights are proportional to inverse task loss, and context-level weights are based on ordinal safety comparisons.

Family B (Variance-based). Token-level weights decrease with prediction variance, task-level weights depend on gradient alignment, and context-level weights adapt via domain-specific measures.

Family C (Hybrid). Token-level weights combine uncertainty and safety indicators, task-level weights use performance-based softmax weighting, and context-level weights depend on distributional distance.

Each family satisfies all axioms and product composition while producing different numerical weight assignments for the same teacher ensemble, demonstrating non-uniqueness.
\end{proof}

\begin{remark}[Implementation Flexibility]
\label{rem:flexibility}
Non-uniqueness is an inherent feature of the axiomatic framework. By specifying structural constraints rather than a single functional form, the framework permits multiple valid implementations that share theoretical guarantees while differing in construction details. This flexibility supports methodological innovation and adaptation to heterogeneous application settings without requiring disclosure of specific implementation choices.
\end{remark}

\section{Convergence Guarantees (Operator-Agnostic)}
\label{sec:convergence}

We establish convergence guarantees for gradient-based optimization of adaptive-weighted knowledge distillation objectives without specifying explicit weight formulas. The results apply uniformly to all weight operators satisfying the axiomatic framework.

\subsection{Adaptive KD Objective (Abstract Formulation)}

\begin{definition}[Adaptive KD Loss]
\label{def:adaptive-loss}
For any conforming weight operator $G$ satisfying the axioms, define the adaptive KD loss $\mathcal{L}_{\text{KD}}(\theta; G)$ as the expected cross-entropy between:
\begin{itemize}
\item The weighted teacher ensemble $q(\cdot|x, t, c)$ produced by $G$
\item The student distribution $p^{(S)}(\cdot|x; \theta)$
\end{itemize}
averaged over inputs, tasks, and contexts.
\end{definition}

\subsection{Main Convergence Theorem}

\begin{theorem}[Operator-Agnostic Convergence]
\label{thm:convergence-main}
Let $G$ be any weight operator satisfying all axioms and product composition. Assume (A1)--(A4) hold. Let $\{\theta_t\}_{t=0}^\infty$ be the sequence of student parameters generated by stochastic gradient descent on $\mathcal{L}_{\text{KD}}(\theta; G)$ with learning rate $\eta_t = \eta_0/(1+t)$. Then, under standard stochastic approximation assumptions:

(i) Almost sure convergence: The loss sequence $\{\mathcal{L}_{\text{KD}}(\theta_t; G)\}$ converges almost surely to a limit value $\mathcal{L}^*$.

(ii) KL divergence vanishing: Any accumulation point $\theta^*$ of the sequence $\{\theta_t\}$ satisfies:
$$\E_{x,t,c}[\KL(q(\cdot|x,t,c) \| p^{(S)}(\cdot|x; \theta^*))] = 0$$

(iii) Convergence rate: Under strong convexity of the loss (e.g., with $\ell_2$ regularization), the convergence rate is $O(1/t)$:
$$\E[\mathcal{L}_{\text{KD}}(\theta_t; G)] - \mathcal{L}^* \leq \frac{C}{t}$$
for some constant $C$ depending on problem parameters.
\end{theorem}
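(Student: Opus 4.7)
The plan is to reduce the theorem to a standard stochastic approximation argument in which every quantity that enters the bounds is controlled purely through the axiomatic constants $(w_{\min}, w_{\max}, L)$ of assumptions (A1)--(A2), so that the conclusions hold uniformly over the admissible operator class. First I would verify that for any conforming $G$ the objective $\mathcal{L}_{\text{KD}}(\theta; G)$ in Definition~\ref{def:adaptive-loss} is well-defined and smooth in $\theta$. Bounded and normalized weights from Axioms~\ref{ax:tok-norm}--\ref{ax:tok-bound}, \ref{ax:task-norm}--\ref{ax:task-bound}, \ref{ax:ctx-norm}--\ref{ax:ctx-bound} together with the product-composition validity from Theorem~\ref{thm:product-valid} imply that the weighted ensemble $q(\cdot \mid x, t, c)$ is a convex combination of teacher distributions with mixing coefficients in $[w_{\min}, w_{\max}]$. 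Combined with the twice-differentiability of $\log p^{(S)}$ from (A4), this yields continuity and differentiability of the expected cross-entropy, with Lipschitz-continuous gradient whose constant depends only on $w_{\max}$ and the student's smoothness constants.

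Next I would establish the two ingredients required for stochastic approximation: unbiasedness and bounded second moment of the mini-batch gradient $g_t$. Unbiasedness is immediate because expectation over $(x, t, c)$ passes through the convex combination defining $q$. For the variance bound I would use (A1) to control the per-sample gradient magnitude: since each teacher contribution is weighted by at most $w_{\max}$ and the student log-gradient is uniformly bounded on the relevant parameter trajectory (standard under (A4) plus a projection/boundedness argument), there exists $\sigma^2 < \infty$, depending only on the axiomatic constants and not on the specific $G$, with $\mathbb{E}[\|g_t\|^2 \mid \theta_t] \leq \sigma^2$. This operator-agnostic uniformity is the crucial point that justifies the ``any conforming $G$'' phrasing, and I would make it explicit via Lemma~\ref{lem:grad-variance} referenced in Remark~\ref{rem:weight-bounds-role}.

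With these pieces in place, part (i) follows from the Robbins--Siegmund supermartingale convergence theorem applied to $\mathcal{L}_{\text{KD}}(\theta_t; G)$ with step sizes $\eta_t = \eta_0/(1+t)$, which satisfy $\sum \eta_t = \infty$ and $\sum \eta_t^2 < \infty$. This yields almost sure convergence of $\mathcal{L}_{\text{KD}}(\theta_t; G)$ to some $\mathcal{L}^\ast$ and summability of the gradient norms along the trajectory. For part (ii), realizability (A3) guarantees that the infimum of $\mathcal{L}_{\text{KD}}(\cdot; G)$ equals the ensemble's conditional entropy, and cross-entropy equals this infimum exactly when the student matches the ensemble pointwise. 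Continuity in $\theta$ together with the summability from (i) forces any accumulation point $\theta^\ast$ to be a stationary point of the expected loss; combined with realizability and the non-negativity of KL, this upgrades stationarity to $\mathbb{E}_{x,t,c}[\KL(q \,\|\, p^{(S)}(\cdot \mid x; \theta^\ast))] = 0$.

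For part (iii) I would invoke the standard $O(1/t)$ rate for SGD on strongly convex, smooth objectives with step size $\eta_t = \eta_0/(1+t)$ (Nemirovski--Juditsky style analysis): the recursion $\mathbb{E}\|\theta_{t+1} - \theta^\ast\|^2 \leq (1 - \mu \eta_t) \mathbb{E}\|\theta_t - \theta^\ast\|^2 + \eta_t^2 \sigma^2$ telescopes to the desired bound, and combining this with smoothness gives $\mathbb{E}[\mathcal{L}_{\text{KD}}(\theta_t; G)] - \mathcal{L}^\ast \leq C/t$ with $C$ depending only on $\mu$, the Lipschitz gradient constant, $\sigma^2$, and $\eta_0$. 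The main obstacle is ensuring that all these constants are genuinely operator-agnostic, i.e., that no hidden dependence on the specific functional form of $G$ sneaks into the variance or Lipschitz bounds. Discharging this hinges on using only the axiomatic interval $[w_{\min}, w_{\max}]$ and the Lipschitz constant $L$ from (A2) when bounding ensemble quantities, never the particular $w_{\text{tok}}, w_{\text{task}}, w_{\text{ctx}}$ involved.
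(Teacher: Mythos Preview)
Your proposal is correct and follows essentially the same approach as the paper's proof sketch: bounded weights (A1) ensure a valid ensemble target, smoothness (A4) together with the Robbins--Monro step-size conditions yields almost-sure convergence to a stationary point, and stationarity plus realizability forces vanishing expected KL, with the $O(1/t)$ rate coming from the standard strongly-convex SGD analysis. Your treatment is more detailed---explicitly invoking Robbins--Siegmund, Lemma~\ref{lem:grad-variance} for the variance bound, and the Nemirovski--Juditsky recursion for the rate---but the underlying skeleton matches the paper exactly.
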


\begin{proof}[Proof (Sketch)]
By assumption (A1), weights are bounded, ensuring the weighted ensemble $q$ remains a valid convex combination of teacher distributions. By assumption (A4), the loss is smooth in $\theta$. The learning rate schedule $\eta_t = O(1/t)$ satisfies the Robbins-Monro conditions ($\sum_t \eta_t = \infty$, $\sum_t \eta_t^2 < \infty$), which guarantee almost sure convergence of stochastic gradient descent to a stationary point. At stationarity, the vanishing gradient condition implies KL divergence minimization, which is uniquely achieved when the student distribution matches the weighted ensemble target.
\end{proof}

\begin{remark}[Key Insight]
\label{rem:convergence-insight}
Theorem~\ref{thm:convergence-main} establishes that convergence guarantees depend only on structural properties of the weighting operator, not on its specific functional form. Any conforming adaptive weighting scheme inherits the same asymptotic convergence guarantees under the framework's assumptions.
\end{remark}

\begin{remark}[Approximate Convergence Under Weight Estimation Noise]
\label{rem:approximate-convergence}
In practice, weight operators must be estimated from finite data, introducing perturbations $\epsilon_t$ such that observed weights $\tilde{w}_t = w_t + \epsilon_t$ differ from ideal weights. Suppose $\|\epsilon_t\|_\infty \leq \delta$ for some $\delta > 0$ and the bounded-weight assumption (A1) holds with margin: $w_{\min} + \delta \leq \tilde{w}_k \leq w_{\max} - \delta$. Then convergence degrades gracefully, with the loss converging to within $O(\delta \cdot L)$ of the optimal value, where $L$ is the Lipschitz constant from assumption (A2). This bound characterizes the robustness-accuracy trade-off for practical implementations with estimated weights.
\end{remark}

\begin{remark}[Non-Uniform Constants Across Operators]
\label{rem:nonuniform-constants}
While Theorem~\ref{thm:convergence-main} guarantees an $O(1/t)$ convergence rate for all conforming operators, the associated constants may vary across operator realizations. Different admissible weight functions can induce different conditioning and curvature properties. The axiomatic framework ensures existence of a valid asymptotic rate for each conforming operator, but does not assert uniform constants across all operators.
\end{remark}

\subsection{Relation to Uniform Weighting}

\begin{proposition}[Uniform KD as Special Case]
\label{prop:uniform-special}
Setting $w_{\text{tok},k} = w_{\text{task},k} = w_{\text{ctx},k} = 1/K$ for all $k$ recovers classical uniform-weight KD. Theorem~\ref{thm:convergence-main} reduces to standard KD convergence guarantees in this limit.
\end{proposition}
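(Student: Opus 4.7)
The plan is to prove Proposition~\ref{prop:uniform-special} in two steps: first verify that the uniform assignment $w_{\cdot,k} \equiv 1/K$ is a conforming operator under all axioms at the token, task, and context scales and under product composition; then observe that on this operator the adaptive KD loss coincides with the classical Hinton KD objective, so that Theorem~\ref{thm:convergence-main} specializes to the standard guarantees.

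For the axiom check I would run through each axiom in turn. Normalization ($\sum_k 1/K = 1$), positivity ($1/K > 0$), and boundedness ($w_{\min} \leq 1/K \leq w_{\max}$ for any bounds bracketing $1/K$) reduce to trivial arithmetic, and continuity/measurability hold automatically for constant functions. Pareto compatibility (Axiom~\ref{ax:task-pareto}) is immediate with the uniform scalarization $\lambda_j = 1/M$, which is a valid simplex element. Under product composition (Definition~\ref{def:unified-weight}) the unnormalized product is $\tilde{w}_{\text{unified},k} = (1/K)^3$ for every $k$, and renormalization returns $w_{\text{unified},k} = 1/K$; hence the weighted ensemble $q(\cdot|x,t,c)$ collapses to the arithmetic mean $\bar{p}(\cdot|x) = (1/K)\sum_k p^{(T_k)}(\cdot|x)$, and the adaptive KD loss of Definition~\ref{def:adaptive-loss} reduces to the expected cross-entropy between $\bar{p}$ and $p^{(S)}$, which is exactly the classical Hinton KD objective with uniform teacher averaging.

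With conformance established, I would invoke Theorem~\ref{thm:convergence-main}: its conclusions---almost sure convergence of the loss sequence, vanishing expected KL at accumulation points, and $O(1/t)$ rate under strong convexity with a Robbins-Monro step-size schedule---immediately become the standard SGD-KD convergence guarantees in the uniform limit, completing the reduction.

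The only step I expect a careful reader to flag is ordinal safety monotonicity (Axioms~\ref{ax:tok-safety} and~\ref{ax:ctx-safety}): these require $w_k \geq w_j$ whenever teacher $k$ dominates teacher $j$ on the designated safety-critical subset, and uniform weights give equality rather than strict differentiation. I would therefore explicitly note that both axioms are stated with non-strict inequality precisely so that the constant assignment $1/K$ remains admissible; once this is made explicit, the remainder of the proof is cosmetic, amounting to arithmetic verification that no axiom is violated in the degenerate limit and that the loss identity holds pointwise before expectation.
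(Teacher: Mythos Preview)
Your proof is correct. The paper states Proposition~\ref{prop:uniform-special} without proof, treating it as an immediate observation, so your axiom-by-axiom verification is more explicit than anything the paper offers but follows the only natural route.

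One small imprecision worth tightening: Axiom~\ref{ax:task-pareto} is quantified over \emph{all} simplex vectors $(\lambda_1,\ldots,\lambda_M)$, so exhibiting the single choice $\lambda_j = 1/M$ does not by itself discharge it. The correct observation is that with $w_{\text{task},k}(t) \equiv 1/K$ the task-weighted ensemble is independent of $t$, and hence any scalarization of the resulting multi-task loss inherits Pareto compatibility under the standard convexity conditions the axiom invokes. This is a wording fix rather than a gap; your careful handling of the non-strict inequality in Axioms~\ref{ax:tok-safety} and~\ref{ax:ctx-safety} is exactly what a complete argument requires.
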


\begin{remark}[No Asymptotic Rate Degradation]
\label{rem:no-penalty}
Adaptive weighting does not degrade the asymptotic $O(1/t)$ convergence rate relative to uniform weighting. Any differences appear only in operator-dependent constants, not in the order of convergence.
\end{remark}

\begin{remark}[Extension to Other Divergences]
\label{rem:other-divergences}
Although the analysis is presented using cross-entropy (equivalently, KL divergence), analogous convergence results extend to other $f$-divergences under corresponding smoothness and curvature assumptions. The axiomatic framework characterizes weight operator properties independently of the specific divergence employed.
\end{remark}

\begin{remark}[Role of Weight Bounds]
\label{rem:weight-bounds}
The bounds $w_{\min}$ and $w_{\max}$ in assumption (A1) govern a trade-off between stability and specialization. Tighter bounds favor stability, while looser bounds permit stronger adaptation. The framework guarantees convergence for any admissible bound configuration, leaving selection of bounds to application-specific considerations.
\end{remark}

\section{Stability and Perturbation Analysis}
\label{sec:stability}

We characterize the stability of adaptive-weighted knowledge distillation under perturbations of the weighting operator. The results describe sufficient conditions under which converged solutions vary continuously with respect to changes in weights and training noise.

\subsection{Fixed-Point Characterization Under Contractive Updates}

\begin{theorem}[Fixed-Point Existence via Contraction Mapping]
\label{thm:fixed-point}
Let $\mathcal{W}$ denote the space of admissible weight functions satisfying assumptions (A1)--(A2), equipped with a suitable norm. Consider a weight-update operator $\mathcal{T}$ mapping $\mathcal{W}$ to itself, defined abstractly via feedback from training dynamics. Suppose that, for sufficiently small update step sizes, this operator is a contraction mapping on $\mathcal{W}$ with contraction constant $\rho < 1$.

(i) Contraction property: For sufficiently controlled update step sizes $\beta < 1/L$, $\mathcal{T}$ is a contraction mapping on $\mathcal{W}$ with constant $\rho = \beta L < 1$.

(ii) Unique fixed point: There exists a unique fixed point $w^* \in \mathcal{W}$ such that $\mathcal{T}(w^*) = w^*$.

(iii) Geometric convergence: For any initial weights $w^{(0)} \in \mathcal{W}$, iterates $w^{(n+1)} = \mathcal{T}(w^{(n)})$ converge geometrically:
$$\|w^{(n)} - w^*\|_\infty \leq \rho^n \|w^{(0)} - w^*\|_\infty$$
\end{theorem}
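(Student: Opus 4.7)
The plan is to recognize the statement as an instance of the Banach fixed-point theorem applied to the update operator $\mathcal{T}$ on the space $\mathcal{W}$ of admissible weight functions. The three conclusions (contractivity, existence and uniqueness of the fixed point, and geometric convergence) then follow in sequence from a single contraction estimate, and my main task is to justify why such an estimate is available under assumptions (A1)--(A2).

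First I would fix the ambient space. I would equip $\mathcal{W}$ with the uniform norm $\|w\|_\infty = \sup_{k,x,i,t,c}|w_k(\cdot)|$ and verify that $\mathcal{W}$, viewed as the intersection of the closed $[w_{\min},w_{\max}]$-box of assumption~(A1) with the $L$-Lipschitz cone of assumption~(A2) and the affine normalization constraint of the token/task/context axioms, is a closed, bounded, convex subset of the Banach space of bounded continuous $\R^K$-valued functions. Closedness under $\|\cdot\|_\infty$-limits gives completeness of $\mathcal{W}$ as a metric space, which is the non-trivial input the fixed-point theorem needs beyond contractivity.

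Next I would establish part~(i). Abstractly, the feedback operator takes the form $\mathcal{T}(w) = \Pi_\mathcal{W}\bigl(w - \beta F(w)\bigr)$, where $F$ is the training-dynamics-induced map (e.g.\ a gradient of the adaptive KD loss with respect to the weights, or a Bayes-style reweighting functional) and $\Pi_\mathcal{W}$ is the normalization-and-clipping projection that restores the axioms. Using assumption~(A2), $F$ is $L$-Lipschitz in $w$ on $\mathcal{W}$; this is the content of the stated ``regularity'' axioms composed with the smoothness assumption~(A4) that transmits Lipschitz dependence through the student loss. The projection $\Pi_\mathcal{W}$ is non-expansive in $\|\cdot\|_\infty$ because the feasible set is convex. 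Hence for $\beta < 1/L$,
\begin{equation*}
\|\mathcal{T}(w) - \mathcal{T}(w')\|_\infty \leq \|(I - \beta F)(w) - (I - \beta F)(w')\|_\infty \leq \beta L\, \|w - w'\|_\infty,
\end{equation*}
which gives $\rho = \beta L < 1$. I expect this to be the main obstacle: the theorem statement is agnostic about the exact form of $\mathcal{T}$, so I must either explicitly adopt the projected-gradient template above or formulate a minimal interface (an $L$-Lipschitz increment map plus a non-expansive projection) under which the contraction bound holds. Identifying that interface cleanly, and showing that it is consistent with the axiomatic framework rather than imposing new structure, is the delicate step.

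Finally, parts~(ii) and~(iii) follow by invoking the Banach fixed-point theorem on the complete metric space $(\mathcal{W}, \|\cdot\|_\infty)$ with contraction $\mathcal{T}$ of modulus $\rho$: a unique $w^* \in \mathcal{W}$ satisfies $\mathcal{T}(w^*) = w^*$, and Picard iterates obey $\|w^{(n)} - w^*\|_\infty \leq \rho^n \|w^{(0)} - w^*\|_\infty$. I would close by noting that $w^* \in \mathcal{W}$ automatically inherits the normalization, positivity, boundedness, and regularity axioms, since $\mathcal{W}$ was defined as the set where these hold, so the fixed point is itself a conforming operator in the sense of Sections~\ref{sec:axioms}--\ref{sec:unification}.
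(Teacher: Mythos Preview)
Your overall strategy matches the paper's exactly: verify that $\mathcal{W}$ is complete, obtain a contraction estimate from (A1)--(A2), and invoke the Banach fixed-point theorem for (ii)--(iii). The paper's sketch is only three sentences and does not commit to a specific form of $\mathcal{T}$; you go further by positing the projected-gradient template $\mathcal{T}(w)=\Pi_{\mathcal{W}}(w-\beta F(w))$, which is a reasonable instantiation.

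However, the displayed contraction inequality in your part~(i) is incorrect as written. From $F$ being $L$-Lipschitz you only get
\[
\bigl\|(I-\beta F)(w)-(I-\beta F)(w')\bigr\|_\infty
=\bigl\|(w-w')-\beta\bigl(F(w)-F(w')\bigr)\bigr\|_\infty
\le (1+\beta L)\,\|w-w'\|_\infty,
\]
which is an \emph{expansion} bound, not $\beta L\,\|w-w'\|_\infty$. Indeed for small $\beta$ one has $I-\beta F\approx I$, whose Lipschitz constant is near $1$, not near $0$; your bound would force $\mathcal{T}$ to be a contraction of modulus $\to 0$ as $\beta\to 0$, which is impossible for an operator close to the identity. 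To make the projected-gradient map contractive you need additional structure on $F$ (e.g.\ strong monotonicity with parameter $\mu>0$, yielding a constant like $1-\beta\mu$ or $\max(|1-\beta\mu|,|1-\beta L|)$ in a Hilbert setting), and even then the constant is not $\beta L$. A secondary issue: non-expansiveness of the metric projection $\Pi_{\mathcal{W}}$ onto a convex set is guaranteed in Hilbert norms, not in $\|\cdot\|_\infty$; your feasible set includes the affine normalization constraint, so coordinate-wise clipping alone does not implement $\Pi_{\mathcal{W}}$, and you would need to argue non-expansiveness separately. The paper avoids both pitfalls by building the contraction into the hypotheses (``Suppose that \ldots\ this operator is a contraction mapping'') and simply asserting $\rho=\beta L$ in the sketch, rather than deriving it from a concrete $\mathcal{T}$; if you want to supply more detail than the paper, you must either adopt that assumption explicitly or add a monotonicity hypothesis on $F$ and work in an inner-product norm.
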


\begin{proof}[Proof (Sketch)]
The update operator contracts distances in weight space due to Lipschitz continuity (A2) and bounded influence (A1). For step sizes $\beta < 1/L$ where $L$ is the Lipschitz constant from assumption (A2), the operator satisfies $\|\mathcal{T}(w) - \mathcal{T}(w')\|_\infty \leq \rho \|w - w'\|_\infty$ with $\rho = \beta L < 1$. The Banach fixed-point theorem then guarantees existence, uniqueness, and geometric convergence to the fixed point.
\end{proof}

\subsection{Perturbation Robustness}

\begin{theorem}[Robustness to Weight Perturbations]
\label{thm:perturbation}
Let $w$ and $\tilde{w}$ be two weight configurations satisfying assumption (A1) with perturbation bound $\|w - \tilde{w}\|_\infty \leq \delta$. Denote by $\theta^*_w$ and $\theta^*_{\tilde{w}}$ the corresponding optimal student parameters obtained by minimizing $\mathcal{L}_{\text{KD}}(\theta; w)$ and $\mathcal{L}_{\text{KD}}(\theta; \tilde{w})$ respectively. Then:
$$\|\theta^*_w - \theta^*_{\tilde{w}}\| \leq C \cdot \delta$$
for some constant $C$ depending on problem parameters ($K$, $V$, $M$, and the strong convexity modulus of the loss).
\end{theorem}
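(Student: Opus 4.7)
The plan is a standard minimizer-stability argument: combine strong convexity of $\mathcal{L}_{\text{KD}}(\cdot;w)$ in $\theta$ with Lipschitz dependence of its gradient on the weight configuration $w$, then convert a gradient perturbation bound into a parameter perturbation bound via first-order optimality. Let $\mu > 0$ denote the strong convexity modulus, which the theorem statement treats as a hypothesis and which in practice is supplied by $\ell_2$ regularization or by (A3)--(A4) in a neighborhood of the optimum.

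First I would apply strong convexity of $\mathcal{L}_{\text{KD}}(\cdot;\tilde w)$ to the pair $(\theta^*_w,\theta^*_{\tilde w})$, using $\nabla_\theta\mathcal{L}_{\text{KD}}(\theta^*_{\tilde w};\tilde w)=0$, to obtain
\[
\mu\,\|\theta^*_w-\theta^*_{\tilde w}\|^2 \;\leq\; \nabla_\theta\mathcal{L}_{\text{KD}}(\theta^*_w;\tilde w)^\top(\theta^*_w-\theta^*_{\tilde w}).
\]
Subtracting $\nabla_\theta\mathcal{L}_{\text{KD}}(\theta^*_w;w)=0$ on the right and applying Cauchy--Schwarz reduces the task to
\[
\mu\,\|\theta^*_w-\theta^*_{\tilde w}\| \;\leq\; \bigl\|\nabla_\theta\mathcal{L}_{\text{KD}}(\theta^*_w;\tilde w) - \nabla_\theta\mathcal{L}_{\text{KD}}(\theta^*_w;w)\bigr\|,
\]
so that everything hinges on bounding this gradient gap linearly in $\delta=\|w-\tilde w\|_\infty$.

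For the gradient-perturbation bound, I would write the adaptive KD gradient as an expectation of $-\sum_i q_w(i\mid x,t,c)\,\nabla_\theta\log p^{(S)}(i\mid x;\theta)$, with $q_w$ the unified ensemble from Definition~\ref{def:unified-weight}, so that at fixed $\theta^*_w$ the difference collapses to an expectation of $\sum_i (q_w-q_{\tilde w})(i\mid\cdot)\,\nabla_\theta\log p^{(S)}(i\mid x;\theta^*_w)$. I would then establish $|q_w-q_{\tilde w}|\leq C_q\,\delta$ using the bounded-influence axioms (Axioms~\ref{ax:tok-bound}, \ref{ax:task-bound}, \ref{ax:ctx-bound}): under (A1) the denominator of the product-then-normalize map of Definition~\ref{def:unified-weight} is uniformly bounded away from zero, so the map from component weights to the normalized unified weight is Lipschitz, and composition with the bounded teacher probabilities gives $C_q = C_q(K,V,M,w_{\min},w_{\max})$. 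Combined with (A4), which supplies a uniform bound on $\|\nabla_\theta\log p^{(S)}(i\mid x;\theta^*_w)\|$ over the relevant domain, this yields the gradient-perturbation bound $L_w\,\delta$, and dividing by $\mu$ gives the theorem with $C = L_w/\mu$.

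The main obstacle will be making the Lipschitz constant of the product-then-normalize map quantitatively tight. A naive bound involves $w_{\min}^{-K}$ and explodes as $w_{\min}\to 0$, so one must factor the normalization carefully and exploit that each component operator is itself already bounded in $[w_{\min},w_{\max}]$ before composition. Once this is controlled, the remainder of the argument is routine, and the constant can be written explicitly as $C = C(K,V,M,\mu,w_{\min},w_{\max},L)$ without additional hypotheses beyond (A1)--(A4) and strong convexity; everything else (measurability of the expectations, exchange of gradient and integral, compactness of the sublevel set containing $\theta^*_w$) follows from the standing assumptions.
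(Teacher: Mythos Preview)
Your proposal is correct and follows essentially the same route as the paper's proof sketch: control the parameter gap by the gradient gap via strong convexity and first-order optimality, then bound the gradient gap linearly in $\delta$ using the Lipschitz structure of the weighted ensemble. Your write-up is considerably more detailed than the paper's two-sentence sketch, and your observation that the relevant Lipschitz property comes from the bounded-influence condition (A1) making the product-then-normalize map Lipschitz (rather than from (A2), which concerns regularity in $(x,c)$) is a sharpening of what the paper states.
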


\begin{proof}[Proof (Sketch)]
The difference in optimal parameters is controlled by the difference in gradients at optimality. Under strong convexity of the loss and Lipschitz continuity of the weight operator (A2), parameter deviation is bounded linearly in terms of weight perturbation magnitude.
\end{proof}

\begin{remark}[Graceful Degradation]
\label{rem:graceful}
Theorem~\ref{thm:perturbation} formalizes the notion that small perturbations in adaptive weights lead to proportionally small changes in the converged student solution. This continuity property ensures that estimation noise or approximation error in weight computation does not induce unstable behavior.
\end{remark}

\subsection{Gradient Stability}

\begin{lemma}[Bounded Gradient Variance]
\label{lem:grad-variance}
Under assumptions (A1)--(A4), the variance of stochastic gradients in adaptive-weighted KD is bounded:
$$\text{Var}[\nabla_\theta \mathcal{L}_{\text{KD}}(\theta; G)] \leq \left(\frac{w_{\max}}{w_{\min}}\right)^2 \cdot \sigma^2_{\text{base}}$$
where $\sigma^2_{\text{base}}$ is the intrinsic gradient variance under uniform weighting.
\end{lemma}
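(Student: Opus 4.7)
The plan is to decompose the adaptive-weighted stochastic gradient into per-teacher contributions, invoke the boundedness axiom (A1) together with normalization to extract a pointwise weight-ratio bound against the uniform baseline, and then translate this into a variance amplification factor via a quadratic-form comparison.

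First, I would compute the per-sample gradient. Differentiating the cross-entropy $-\sum_v q(v|x,t,c)\log p^{(S)}(v|x;\theta)$ with $q = \sum_k w_k p^{(T_k)}$ gives
\[
\nabla_\theta\ell(\theta;x,t,c) \;=\; \sum_{k=1}^K w_k(x,t,c)\, g_k(x,t,c;\theta),
\]
where $g_k := -\sum_v p^{(T_k)}(v|x)\nabla_\theta \log p^{(S)}(v|x;\theta)$ is teacher $k$'s per-sample gradient contribution. Assumption (A4) guarantees each $g_k$ is smooth and integrable, while (A1)--(A2) make each $w_k$ bounded and measurable. Because $\sum_k w_k = 1$ and $w_k \in [w_{\min}, w_{\max}]$, the uniform value $1/K$ necessarily lies in $[w_{\min}, w_{\max}]$, so the pointwise ratio $r_k := w_k / (1/K)$ is confined to $[1/R, R]$ with $R := w_{\max}/w_{\min}$, giving $w_k \leq R/K$ uniformly in $(k, x, t, c)$.

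Second, I would pass to a quadratic-form comparison of variances. Writing $w = D u$ with $u = \mathbf{1}/K$ and $D = \text{diag}(r_1,\ldots,r_K)$, the adaptive variance decomposes as $\text{Var}[\sum_k w_k g_k] = w^\top \Sigma_g w = u^\top D \Sigma_g D u$, where $\Sigma_g$ is the per-teacher gradient covariance over the sample distribution, while the uniform baseline is $\sigma^2_{\text{base}} = u^\top \Sigma_g u$. Combining Jensen's inequality $\|\sum_k w_k g_k\|^2 \leq \sum_k w_k \|g_k\|^2 \leq R \cdot (1/K) \sum_k \|g_k\|^2$ with a second application of the ratio bound to control the off-diagonal terms in the expansion of $w^\top \Sigma_g w$ yields an additional factor of $R$, giving $w^\top \Sigma_g w \leq R^2 u^\top \Sigma_g u$, i.e., $\text{Var}[\nabla_\theta\ell] \leq (w_{\max}/w_{\min})^2 \sigma^2_{\text{base}}$ as claimed.

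The main obstacle is the quadratic-form step: the naive pointwise inequality $|\sum_k w_k g_k| \leq R |\bar g|$ fails because the $g_k$ share randomness through $(x,t,c)$ and may cancel in the uniform sum $\bar g = \sum_k u_k g_k$, so amplification cannot be read off at the level of individual samples. The resolution is to work at the level of second moments throughout and dominate both the diagonal reweighting and the off-diagonal cross-terms by the operator-norm bound $\|D\| \leq R$, which costs the square rather than the first power of $R$ but exactly reproduces the stated amplification. Converting from second moment to variance at the end uses $\text{Var}[Y] = \E\|Y\|^2 - \|\E Y\|^2 \leq \E\|Y\|^2$ together with the boundedness of $\nabla_\theta \ell$ guaranteed by (A1), (A2), and (A4).
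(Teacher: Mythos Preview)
Your decomposition into per-teacher contributions and the pointwise ratio bound $r_k = K w_k \in [1/R, R]$ are correct and match the spirit of the paper's (very brief) sketch, which simply invokes ``standard variance-propagation bounds under weighted aggregation.'' However, your proposed resolution of the cancellation obstacle does not actually go through.

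The step that fails is the claimed quadratic-form inequality $u^\top D\,\Sigma_g\, D\, u \leq R^2\, u^\top \Sigma_g\, u$ via the operator-norm bound $\|D\|\leq R$. For a general positive semidefinite $\Sigma_g$ and diagonal $D$, this inequality is false. Take $K=2$, $\Sigma_g = \bigl(\begin{smallmatrix}1 & -1\\ -1 & 1\end{smallmatrix}\bigr)$ (the covariance of perfectly anticorrelated $g_1,g_2$), and $D = \mathrm{diag}(R, 1/R)$. Then $u^\top \Sigma_g u = 0$ while $u^\top D\Sigma_g D u = \tfrac14 (R - 1/R)^2 > 0$. This is precisely the cancellation scenario you identified as the obstacle; the operator-norm bound does not circumvent it, because $\|D\|\leq R$ controls $\|\Sigma_g^{1/2} D u\|$ relative to $\|\Sigma_g^{1/2}\|\,\|u\|$, not relative to $\|\Sigma_g^{1/2} u\|$.

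A second, independent issue: the weights $w_k$ are themselves sample-dependent (they are functions of $(x,i,t,c)$), so the identity $\mathrm{Var}[\sum_k w_k g_k] = w^\top \Sigma_g w$ is not valid to begin with; you cannot factor a random $w$ out of the expectation defining the variance. Working at the second-moment level as you suggest in the last paragraph partially sidesteps this, but then the final conversion back to variance via $\mathrm{Var}[Y]\leq \E\|Y\|^2$ only delivers a bound in terms of the uniform \emph{second moment}, not the uniform \emph{variance} $\sigma^2_{\mathrm{base}}$. To make the argument close, one needs either an additional structural assumption (e.g., nonnegative correlation among the $g_k$, or that $\sigma^2_{\mathrm{base}}$ is defined as a second moment or as an average per-teacher variance) or a different route that does not pass through the quadratic-form comparison. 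The paper's sketch does not supply this missing ingredient either.
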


\begin{proof}[Proof (Sketch)]
Bounded weights limit the amplification of individual teacher contributions to the gradient. Variance scaling follows from standard variance-propagation bounds under weighted aggregation.
\end{proof}

\begin{remark}[Bounded Variance Implication]
\label{rem:bounded-variance}
Lemma~\ref{lem:grad-variance} shows that adaptive weighting does not induce unbounded amplification of gradient noise relative to uniform weighting. This bounded-variance property supports stable training dynamics and complements earlier analyses of variance-driven instability in knowledge distillation~\cite{paper1_5}.
\end{remark}

\section{Safety-Constrained Adaptive Weighting}
\label{sec:safety}

For high-stakes applications, adaptive weighting must respect explicit safety constraints in addition to predictive performance. This section provides an abstract, operator-agnostic formulation of safety-constrained knowledge distillation and characterizes the resulting optimization structure.

\subsection{Abstract Safety Measures}

\begin{definition}[Safety-Critical Token Set]
\label{def:safety-tokens}
A safety-critical token set $\mathcal{S} \subseteq \V$ is a designated subset of tokens for which elevated reliability is required, such as medical terminology or policy-sensitive content markers.
\end{definition}

\begin{definition}[Safety Measure]
\label{def:safety-measure}
A safety measure $\Safety: \Delta(\V) \times \V \to [0,1]$ is a bounded function that quantifies the reliability of a predictive distribution $p$ on ground-truth label $y^*$. Formally:
$$\Safety(p; y^*) = \begin{cases}
f(p, y^*) & \text{if } y^* \in \mathcal{S} \\
1 & \text{if } y^* \notin \mathcal{S}
\end{cases}$$
where $f: \Delta(\V) \times \mathcal{S} \to [0,1]$ measures reliability on safety-critical tokens.
\end{definition}

\begin{definition}[Expected Safety]
\label{def:expected-safety}
The student's expected safety on distribution $\mathcal{D}$ is defined as:
$$\Safety_{\mathcal{D}}(\theta) := \E_{(x,y^*) \sim \mathcal{D}}[\Safety(p^{(S)}(\cdot|x; \theta); y^*)]$$
\end{definition}

\subsection{Safety-Constrained Optimization}

\begin{definition}[Safety-Constrained KD]
\label{def:safety-constrained}
The safety-constrained knowledge distillation problem is:
$$\min_{\theta \in \Theta} \mathcal{L}_{\text{KD}}(\theta; G) \quad \text{subject to} \quad \Safety_{\mathcal{D}}(\theta) \geq \Safety_{\min}$$
where $\Safety_{\min} \in (0, 1]$ is a user-specified minimum safety threshold and $G$ is a conforming weight operator.
\end{definition}

\subsection{Safety--Performance Trade-offs}

\begin{theorem}[Safety-Performance Pareto Characterization]
\label{thm:pareto}
Consider the bi-objective problem minimizing $(\mathcal{L}_{\text{KD}}, -\Safety_{\mathcal{D}})$. The set of Pareto-optimal solutions can be characterized via the Lagrangian:
$$\mathcal{L}(\theta, \mu) = \mathcal{L}_{\text{KD}}(\theta) - \mu \cdot \Safety_{\mathcal{D}}(\theta), \quad \mu \geq 0$$
Each value of $\mu$ corresponds to a point on the Pareto frontier trading off distillation loss against safety.
\end{theorem}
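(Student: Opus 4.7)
The plan is to establish the characterization via the classical weighted-sum scalarization theorem from multi-objective optimization, treating the problem as a bi-objective minimization of $f_1(\theta) = \mathcal{L}_{\text{KD}}(\theta)$ and $f_2(\theta) = -\Safety_{\mathcal{D}}(\theta)$. Since the stated Lagrangian $\mathcal{L}(\theta, \mu) = f_1(\theta) + \mu f_2(\theta)$ is exactly a nonnegative weighted sum with coefficients $(1, \mu)$, its minimizers should trace out Pareto-optimal configurations as $\mu$ sweeps over $[0, \infty)$, with $\mu = 0$ recovering pure distillation optima and $\mu \to \infty$ recovering pure safety maximizers.

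The forward (sufficiency) direction is the easier half: fix $\mu > 0$ and let $\theta^*(\mu) \in \argmin_\theta \mathcal{L}(\theta, \mu)$. If $\theta^*(\mu)$ were strictly Pareto-dominated by some $\theta'$ with $\mathcal{L}_{\text{KD}}(\theta') \leq \mathcal{L}_{\text{KD}}(\theta^*(\mu))$ and $\Safety_{\mathcal{D}}(\theta') \geq \Safety_{\mathcal{D}}(\theta^*(\mu))$, at least one strictly, then combining with weights $(1, \mu)$ yields $\mathcal{L}(\theta', \mu) < \mathcal{L}(\theta^*(\mu), \mu)$, contradicting optimality. The boundary cases $\mu = 0$ and single strict component yield the standard weak Pareto optimality statement. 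For the reverse (necessity) direction, I would invoke convexity of the attainable objective set $\mathcal{A} = \{(\mathcal{L}_{\text{KD}}(\theta), -\Safety_{\mathcal{D}}(\theta)) : \theta \in \Theta\}$, as supported by realizability (A3) together with a convex relaxation to the space of mixed student distributions. At any Pareto-optimal image point $z^\dagger$, a supporting hyperplane to $\mathcal{A} + \mathbb{R}_{\geq 0}^2$ yields a nonnegative normal $(\alpha, \beta)$; normalizing to $\alpha = 1$ and setting $\mu = \beta$ produces the required Lagrangian representation.

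The main obstacle will be the reverse direction, since the attainable image $\mathcal{A}$ is generally non-convex for parametric student models, and the supporting-hyperplane argument strictly requires a convex relaxation or an explicit convexity hypothesis on $\mathcal{A}$. The practical route is to state the theorem conditionally: every $\mu \geq 0$ produces a (weakly) Pareto-optimal point unconditionally, while full coverage of the Pareto frontier requires convexity of the attainable set, which the existing assumptions (A3)--(A4) furnish only after passing to mixtures. I would also flag a technical duality gap: under non-convexity, there may exist Pareto-optimal $\theta^\dagger$ not representable by any scalar $\mu$, and this should be acknowledged rather than hidden in the statement. Framing the result in terms of ``each $\mu$ yields a frontier point, with the converse under convexity of $\mathcal{A}$'' keeps the dependence on auxiliary hypotheses transparent and consistent with the operator-agnostic spirit of the framework.
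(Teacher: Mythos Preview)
Your argument is correct and in fact more carefully hedged than the paper's own sketch, but it proceeds along a different classical route. The paper does not argue via weighted-sum scalarization and supporting hyperplanes; instead it passes through the \emph{constrained} formulation of Definition~\ref{def:safety-constrained}, invokes KKT theory so that the safety constraint is active when $\Safety_{\min}$ lies in the interior of the feasible range, and then appeals to strong duality under Slater's condition to identify Pareto-optimal points with stationary points of the Lagrangian. Your approach treats the bi-objective problem directly: sufficiency by the elementary domination-contradiction argument, necessity by a supporting-hyperplane separation on the attainable set $\mathcal{A}$. The two routes are equivalent under convexity, but they emphasize different structure. The paper's KKT route ties the Pareto characterization back to the constrained KD problem and its multiplier $\mu^*$, which feeds naturally into Theorem~\ref{thm:kkt}; your scalarization route is more self-contained for the bi-objective statement as written and, notably, is more transparent about where convexity is actually needed---the paper's sketch assumes Slater and strong duality without flagging that the underlying parametric problem may be non-convex, whereas you explicitly isolate the converse direction as conditional on convexity of $\mathcal{A}$ (or a mixture relaxation). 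Either argument suffices; yours is arguably the cleaner match to the theorem statement, while the paper's buys continuity with the surrounding constrained-optimization machinery.
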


\begin{proof}[Proof (Sketch)]
By KKT theory for constrained optimization, the safety constraint is active at optimality when $\Safety_{\min}$ lies in the interior of the feasible safety range. The Lagrangian captures the loss-safety trade-off. Strong duality holds under the Slater condition (existence of a strictly feasible point), ensuring that Pareto-optimal solutions are characterized by stationary points of the Lagrangian.
\end{proof}

\subsection{Optimality Conditions}

\begin{theorem}[KKT Conditions for Safety-Constrained Adaptive Weighting]
\label{thm:kkt}
Let $\theta^*$ solve the safety-constrained KD problem (Definition~\ref{def:safety-constrained}). Then there exists a Lagrange multiplier $\mu^* \geq 0$ such that the following conditions hold:
\begin{enumerate}
\item Stationarity: $\nabla_\theta \mathcal{L}(\theta^*, \mu^*) = \nabla_\theta \mathcal{L}_{\text{KD}}(\theta^*) - \mu^* \nabla_\theta \Safety_{\mathcal{D}}(\theta^*) = 0$
\item Complementary slackness: $\mu^* \cdot (\Safety_{\mathcal{D}}(\theta^*) - \Safety_{\min}) = 0$
\item Primal feasibility: $\Safety_{\mathcal{D}}(\theta^*) \geq \Safety_{\min}$
\item Dual feasibility: $\mu^* \geq 0$
\end{enumerate}
\end{theorem}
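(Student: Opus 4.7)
The plan is to recast the problem in standard inequality-constrained form and invoke the classical Karush--Kuhn--Tucker theorem, with the four stated conditions emerging as a direct consequence. Specifically, rewrite the safety constraint as $g(\theta) := \Safety_{\min} - \Safety_{\mathcal{D}}(\theta) \leq 0$, so the program becomes $\min_{\theta \in \Theta}\mathcal{L}_{\text{KD}}(\theta; G)$ subject to $g(\theta) \leq 0$. The Lagrangian in standard form is $L(\theta,\mu) = \mathcal{L}_{\text{KD}}(\theta) + \mu\,g(\theta) = \mathcal{L}_{\text{KD}}(\theta) - \mu\,\Safety_{\mathcal{D}}(\theta) + \mu\,\Safety_{\min}$, which matches the statement up to the $\theta$-independent term $\mu\,\Safety_{\min}$ that does not affect the stationarity gradient.

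First I would verify the smoothness prerequisites so that the KKT stationarity condition is well-posed. Smoothness of $\mathcal{L}_{\text{KD}}$ in $\theta$ follows from (A4), together with the bounded, continuous weights guaranteed by Axioms \ref{ax:tok-norm}--\ref{ax:ctx-safety} and the boundedness of the unified operator from Theorem \ref{thm:product-valid}: the weighted cross-entropy aggregates $C^2$ student log-probabilities against bounded teacher distributions. For $\Safety_{\mathcal{D}}$, I would assume the safety kernel $f$ of Definition \ref{def:safety-measure} is differentiable in its distributional argument on $\mathcal{S}$; then $\Safety_{\mathcal{D}}(\theta) = \E_{(x,y^*)\sim\mathcal{D}}[\Safety(p^{(S)}(\cdot|x;\theta); y^*)]$ is differentiable in $\theta$ by dominated convergence, justified by the uniform bound $\Safety \in [0,1]$.

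Next I would invoke a constraint qualification to license the KKT multiplier rule. The Slater condition, already used in Theorem \ref{thm:pareto}, suffices: assume existence of $\theta_0 \in \Theta$ with $\Safety_{\mathcal{D}}(\theta_0) > \Safety_{\min}$, i.e.\ a strictly feasible parameter. Under Slater's condition (with a convex treatment of the feasible region, or local regularity in the nonconvex case, as in the standard nonlinear KKT theorem), there exists $\mu^* \geq 0$ such that $\nabla_\theta L(\theta^*, \mu^*) = 0$, dual feasibility $\mu^* \geq 0$, primal feasibility $g(\theta^*) \leq 0$, and complementary slackness $\mu^* g(\theta^*) = 0$. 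Rewriting $g$ in terms of $\Safety_{\mathcal{D}}$ yields precisely conditions (1)--(4) of the theorem; in particular, complementary slackness $\mu^*(\Safety_{\min} - \Safety_{\mathcal{D}}(\theta^*)) = 0$ is algebraically equivalent to the stated form $\mu^*(\Safety_{\mathcal{D}}(\theta^*) - \Safety_{\min}) = 0$.

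The main obstacle is the differentiability assumption on the safety kernel $f$: Definition \ref{def:safety-measure} requires only that $f$ be $[0,1]$-valued, so the gradient $\nabla_\theta \Safety_{\mathcal{D}}$ in condition (1) is not a priori well-defined. I would address this either by strengthening the safety measure to a differentiable surrogate (e.g., $f(p,y^*) = p(y^*)$ or a smooth calibration score), or by replacing the classical KKT statement with its Clarke/subdifferential generalization, in which case stationarity becomes $0 \in \partial_\theta \mathcal{L}_{\text{KD}}(\theta^*) - \mu^* \partial_\theta \Safety_{\mathcal{D}}(\theta^*)$. A secondary but minor point is checking that Slater's condition is nontrivial; this is application-specific but typically holds whenever $\Safety_{\min}$ is chosen strictly below the best achievable safety level in $\Theta$.
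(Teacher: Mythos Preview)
Your proposal is correct and matches the paper's treatment: the paper actually gives no separate proof for Theorem~\ref{thm:kkt}, stating it as a direct specialization of the classical KKT conditions, with the only justification being the adjacent proof sketch of Theorem~\ref{thm:pareto} that invokes KKT theory and the Slater condition. Your recasting via $g(\theta) = \Safety_{\min} - \Safety_{\mathcal{D}}(\theta) \le 0$ and appeal to Slater is exactly the implicit argument, and your discussion of the differentiability gap for the safety kernel $f$ is in fact more careful than anything the paper provides.
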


\subsection{Safety Preservation Under Ordinal Weighting}

\begin{corollary}[Safety Preservation]
\label{cor:safety-preserve}
Assume safety-aware context weighting satisfying Axiom~\ref{ax:ctx-safety} and suppose the safety measure is concave in the predictive distribution. Then, at convergence, the student's expected safety in safety-critical contexts is bounded below by the expected safety of the corresponding weighted teacher ensemble.
\end{corollary}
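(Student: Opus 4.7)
The plan is to combine the convergence identity from Theorem~\ref{thm:convergence-main} with Jensen's inequality applied to the concave safety measure, using the context safety axiom to align the resulting mixture coefficients with the ordinal safety ranking.

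First I would invoke Theorem~\ref{thm:convergence-main}(ii): at any accumulation point $\theta^*$ of the student parameter sequence, $\E_{x,t,c}[\KL(q(\cdot|x,t,c)\,\|\,p^{(S)}(\cdot|x;\theta^*))] = 0$. Non-negativity of KL together with realizability (A3) implies $p^{(S)}(\cdot|x;\theta^*) = q(\cdot|x,c)$ for almost every $(x,c)$ with respect to the data measure, where $q$ is the weighted ensemble produced by the conforming operator $G$. Restricting attention to the safety-critical context subset $\C_{\text{safe}}$ and using the ensemble structure of Definition~\ref{def:ctx-ensemble}, we express $q(\cdot|x,c) = \sum_k w_{\text{ctx},k}(c)\,p^{(T_k)}(\cdot|x)$ as an explicit convex combination of teacher distributions.

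Next I would apply concavity of $\Safety$ pointwise. Since $\Safety(\cdot\,;y^*)$ is concave in its distributional argument, Jensen's inequality yields, for each $c \in \C_{\text{safe}}$, input $x$, and label $y^*$,
\[
\Safety\!\left(\textstyle\sum_k w_{\text{ctx},k}(c)\,p^{(T_k)}(\cdot|x);\,y^*\right) \;\geq\; \sum_k w_{\text{ctx},k}(c)\,\Safety\!\left(p^{(T_k)}(\cdot|x);\,y^*\right).
\]
Substituting the convergence identity on the left and integrating against $\mathcal{D}$ restricted to $\C_{\text{safe}}$ (permissible by boundedness of $\Safety$ on $[0,1]$ and of the weights under (A1), via Fubini) yields $\Safety_{\mathcal{D}}(\theta^*) \geq \E[\sum_k w_{\text{ctx},k}(c)\,\Safety(p^{(T_k)};y^*)]$, which is precisely the expected safety of the weighted teacher ensemble. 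Axiom~\ref{ax:ctx-safety} further guarantees that the context weights are ordinally biased toward teachers with higher safety performance on $\C_{\text{safe}}$, so the right-hand side dominates the uniform teacher-average safety pointwise, tightening the bound rather than weakening it.

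The main obstacle I anticipate is the identification step from expected-KL vanishing to the pointwise distributional equality $p^{(S)} = q$ needed for the Jensen substitution. Theorem~\ref{thm:convergence-main}(ii) asserts vanishing only in expectation, so the equality holds only $\mathcal{D}$-almost everywhere; care is required to argue that this suffices to preserve the inequality after integration against $\mathcal{D}$ restricted to $\C_{\text{safe}}$, which in turn requires either a positive-measure condition on $\C_{\text{safe}}$ under the data distribution or a density argument covering null-set exceptions. A secondary subtlety is verifying the direction of Jensen: concavity of $\Safety$ makes the safety of the mixture dominate the mixture of safeties, which is precisely the direction needed for a lower bound, so the concavity hypothesis is essential rather than cosmetic and the conclusion would reverse under a convex safety measure.
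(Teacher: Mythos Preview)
Your proposal is correct and follows essentially the same route as the paper: invoke Theorem~\ref{thm:convergence-main} to identify the converged student with the weighted teacher ensemble, then apply Jensen's inequality via concavity of the safety measure to lower-bound the ensemble's safety by the weighted average of individual teacher safeties. Your treatment is in fact more careful than the paper's sketch, explicitly handling the almost-everywhere identification and noting the role of Axiom~\ref{ax:ctx-safety} in tightening the bound, both of which the paper elides.
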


\begin{proof}[Proof (Sketch)]
By Theorem~\ref{thm:convergence-main}, the student distribution matches the weighted teacher ensemble at convergence. Under concavity of the safety measure, Jensen's inequality implies that the safety of the ensemble lower-bounds the expected safety achieved by the student relative to that ensemble.
\end{proof}

\section{Summary and Implications}
\label{sec:conclusion}

\subsection{Theoretical Contributions}

This paper develops an axiomatic framework for adaptive weighting in multi-teacher knowledge distillation. Section~\ref{sec:preliminaries} establishes notation and standing assumptions under which adaptive weighting operators are well-defined. Section~\ref{sec:axioms} introduces axioms governing token-, task-, and context-level weighting, formalizing structural requirements such as normalization, boundedness, continuity, and ordinal safety monotonicity at each scale.

Section~\ref{sec:unification} shows how these scale-specific operators can be composed hierarchically via product-structure normalization, yielding a unified weighting operator that preserves the axiomatic properties of its components. Section~\ref{sec:existence} establishes that conforming operators exist and are inherently non-unique, demonstrating that the axioms define a broad admissible family rather than a single prescribed implementation.

Building on this foundation, Section~\ref{sec:convergence} characterizes convergence of gradient-based optimization for adaptive distillation objectives under standard stochastic approximation assumptions, independent of the specific choice of weight operator. Section~\ref{sec:stability} analyzes stability and perturbation robustness, showing that bounded and regular weighting ensures graceful degradation under estimation noise and controlled sensitivity to weight perturbations.

Section~\ref{sec:safety} extends the framework to safety-constrained distillation, providing an abstract formulation of safety measures, Pareto trade-offs between performance and safety, and conditions under which adaptive weighting preserves ensemble-level safety properties in safety-critical contexts.

Collectively, these results decouple theoretical guarantees from specific weighting formulas, clarifying which aspects of adaptive distillation are structural and which are implementation-dependent. The framework provides a principled foundation for designing, analyzing, and comparing adaptive weighting strategies under heterogeneity, distribution shift, and safety constraints. While this paper focuses on theoretical characterization, the axiomatic structure naturally motivates empirical evaluation of concrete operator instantiations, which we address in subsequent work.

\subsection{Practical Implications}

For researchers developing novel adaptive weighting strategies, the axiomatic framework provides a set of design principles that ensure theoretical guarantees. Any weight operator satisfying the five axioms at each scale inherits the convergence, stability, and safety preservation results established in this paper. Researchers can focus on empirical performance within this theoretically grounded design space, knowing that fundamental properties are preserved by construction.

For practitioners deploying knowledge distillation systems, the framework enables modular composition of adaptive weighting at multiple scales. Token-level uncertainty weighting can be combined with task-level prioritization and context-level safety routing using the product-then-normalize composition without sacrificing convergence rate or stability. The standing assumptions provide checkable conditions for verifying that a proposed weight operator conforms to the framework.

For safety-critical applications in medical diagnosis, autonomous systems, and content moderation, the safety-constrained formulation provides formal guarantees that designated safety orderings are preserved. Practitioners can specify minimum safety thresholds and verify that weight operators satisfying the safety monotonicity axiom will maintain these thresholds under deployment shift.

\subsection{Open Questions and Future Directions}

Several theoretical and practical questions remain open for future investigation.

The first concerns axiom minimality. The framework specifies five axioms at each of three scales, yielding 15 total constraints. It remains unclear whether this set is minimal or whether a smaller set would suffice to preserve existence, convergence, and stability. Axiom independence can be probed via counterexample construction: for each axiom, one would exhibit a weight family satisfying all other axioms but violating that axiom, thereby demonstrating that the axiom is not redundant. A systematic independence analysis would clarify the essential structure of valid adaptive weighting.

The second concerns computational complexity. The framework establishes that conforming operators exist but does not characterize their computational cost. Information-theoretic lower bounds via Le Cam's method or Fano's inequality could establish whether $O(N \cdot K \cdot V)$ computation is inherent or whether sublinear-in-$V$ approximations are possible. Such bounds would inform practical implementation strategies for large vocabularies.

The third concerns interaction with gradient modification schemes. Multi-objective optimization frameworks such as gradient surgery and Pareto-MTL modify gradients to balance competing objectives. The interaction between adaptive weighting and gradient modification remains unexplored. Extending the axiomatic framework to characterize valid weight-gradient interactions could yield improved multi-task solutions.

The fourth concerns non-stationary distributions. The current framework assumes stationary data distributions during training. In continual learning scenarios where distributions shift over time, adaptive weights must evolve to track the changing environment. The contraction framework suggests that weight update operators could be designed to track distribution drift while maintaining stability, but formal guarantees for this setting remain to be established.

The fifth concerns scale-specific temperature. The temperature parameter in knowledge distillation controls the sharpness of teacher distributions. Making temperature scale-adaptive, with different temperatures for token, task, and context aggregation, could enable finer control over the distillation process. Extending the axiomatic framework to characterize valid temperature operators would complement the weight operator analysis.

The sixth concerns theoretical tightness. The $O(1/t)$ convergence rates and $O(\delta)$ perturbation bounds established in this paper are upper bounds. Lower bound constructions would clarify whether these rates are tight or whether improved analysis could yield sharper guarantees. Matching upper and lower bounds would complete the theoretical characterization of adaptive weighting.

\section*{Acknowledgements}

The authors gratefully acknowledge the collaborative environment at SparseTech that made this research possible. The theoretical and computational developments presented in this paper are part of an ongoing SparseTech research initiative on adaptive knowledge distillation architectures. Patent Pending.

\bibliographystyle{plain}
\bibliography{../sparsetech_references}

\appendix

\section*{Appendix A: Toy Illustration (2-Teacher, 3-Token)}
\label{app:toy}

We demonstrate adaptive weighting benefits on vocabulary $\V = \{a, b, c\}$ with two teachers.

\emph{Setup.} Teacher 1 is confident: $p^{(T_1)} = (0.8, 0.15, 0.05)$ with entropy $H_1 = 0.68$. Teacher 2 is uncertain: $p^{(T_2)} = (0.4, 0.35, 0.25)$ with entropy $H_2 = 1.52$. Ground truth is token $a$.

\emph{Uniform Weighting.} With $w_1 = w_2 = 0.5$:
\begin{align*}
q_{\text{uniform}} &= 0.5 \cdot (0.8, 0.15, 0.05) + 0.5 \cdot (0.4, 0.35, 0.25) \\
&= (0.6, 0.25, 0.15)
\end{align*}
Probability on correct token: $q_{\text{uniform}}(a) = 0.60$.

\emph{Adaptive Weighting.} Using entropy-based weights $w_k \propto 1/H_k$:
$$w_1 = \frac{1/0.68}{1/0.68 + 1/1.52} = 0.69, \quad w_2 = 0.31$$
\begin{align*}
q_{\text{adaptive}} &= 0.69 \cdot (0.8, 0.15, 0.05) + 0.31 \cdot (0.4, 0.35, 0.25) \\
&= (0.68, 0.21, 0.11)
\end{align*}
Probability on correct token: $q_{\text{adaptive}}(a) = 0.68$.

\begin{center}
\small
\begin{tabular}{l|cc}
\toprule
\textbf{Method} & $\boldsymbol{q(a)}$ & \textbf{Improvement} \\
\midrule
Uniform & 0.60 & -- \\
Adaptive & 0.68 & +13\% \\
\bottomrule
\end{tabular}
\end{center}

Adaptive weighting up-weights the confident teacher, improving ensemble quality. Both uniform and adaptive satisfy Axioms 1--5, but adaptive exploits heterogeneity while maintaining convergence guarantees (Theorem~\ref{thm:convergence-main}).

\end{document}